\documentclass[10pt,journal,compsoc]{IEEEtran}
\usepackage[nocompress]{cite}
\usepackage{amsmath,amssymb,amsfonts}
\usepackage{amsthm}
\usepackage{mathtools}
\usepackage{algorithmic}
\usepackage{graphicx}
\usepackage{tikz}
\usetikzlibrary{arrows.meta,positioning,fit}
\usepackage{textcomp}
\usepackage{xcolor}
\usepackage{pifont}   
\usepackage{enumitem} 
\usepackage{booktabs}
\usepackage{tabularx}
\usepackage{makecell}
\usepackage{listings}
\usepackage[hyphens]{url}
\usepackage{seqsplit}

\usepackage{microtype}
\usepackage[colorlinks=true,linkcolor=black,citecolor=black,urlcolor=black]{hyperref}

\newtheorem{property}{Property}
\newtheorem{definition}{Definition}
\newtheorem{lemma}{Lemma}
\newtheorem{remark}{Remark}
\newtheorem{proposition}{Proposition}
\newcommand{\sys}{\textsc{Remit}}
\newcommand{\contract}{\textsc{Resume Contract}}
\newcommand{\LG}{LangGraph}
\newcommand{\LI}{LlamaIndex Workflows}
\newcommand{\CA}{CrewAI}
\newcommand{\yes}{\ding{51}}
\newcommand{\no}{\ding{55}}
\newcommand{\docd}{\textsf{D}}
\newcommand{\und}{\textsf{U}}
\DeclareFontShape{OT1}{ppl}{m}{scit}{<->ssub*ppl/m/sc}{}
\begin{document}
\title{Resume Means Resume: A Machine-Checked\\ Conformance Contract for Checkpoint,\\ Interrupt, and Resume Semantics in\\ Workflow Persistence Layers}
\author{Sajjad~Khan
\thanks{S. Khan is an independent researcher, London, UK
(e-mail: sajjadanwar200@gmail.com).}%
\thanks{Artifact (conformance probes, TLA\textsuperscript{+} models and TLC
configurations, and the \sys{} reference-sequencer design), with a
single-command audit (\texttt{reproduce.sh}) that re-derives every headline
number from committed data. The artifact repository and the \sys{} package
sources (Rust core, PyO3 bindings, decision-free \LG{} shim) are private
pending publication and will be released publicly on publication; access
for review is available from the author on request.}%
}

\IEEEtitleabstractindextext{%
\begin{abstract}
A framework that persists execution state so a run can be interrupted,
survive a crash, and continue must decide what a resume means for
effects that already happened. Five widely deployed agent workflow
frameworks answer differently, none exposes a machine-checkable
contract, and measured behavior violates even the fragments they state.
The \contract{} states six properties over the persistence API ---
prefix continuation, effect exactly-once, fork determinism, checkpoint
validity, consume-once, recovery determinism --- plus fork-intent and
liveness obligations. A TLA\textsuperscript{+} model checks a reference
semantics exhaustively, verdicts unchanged at scaled bounds
($7.4{\times}10^6$ distinct states), and the reference conjunction is
additionally TLAPS-proved unbounded (196 obligations); a 39-cell fault
matrix and two
companion modules yield the separating models independence requires
(five properties independent of the conjunction of the others;
consume-once splits, its consumption clause independent of all six).
A deterministic, LLM-free harness measures five frameworks at pinned
releases. \LG{}~1.2.9
durably records a second resume value and never consults it, persists
schema-invalid state silently, and re-executes durably recorded work
after a real \texttt{SIGKILL} --- exactly-once across interrupts,
at-least-once across crashes, on one API. \CA{}~1.15.2 re-executes
completed effect-bearing methods against its written claim, and
pydantic-graph~1.x cannot resume after a mid-node crash; no two probed
frameworks share a conformance profile. Consume-once holds sequentially
and fails under concurrent delivery: $k$ processes resuming one parked
interrupt fire the gated effect $k$ times, saturation $1.0$ in 36 of 40
cells and never below $0.933$ on both durable backends. The admitting
window tracks the gated node's own execution time, measured by
dose--response. The failure crosses hosts, two racers on separate
machines duplicating in $10/10$ repetitions. Live-model cells
reproduce the fork violation $40/40$ per model. \sys{}, a reference
sequencer whose Verus-verified recovery core is line-identical to the
shipped executable under a CI gate (no end-to-end refinement is
claimed), repairs the fork and validity cells. The cross-process cell is
repaired at the read path, and the repair ships: an opt-in gate claims
consumption in the shared store, serving one racer and refusing the
rest before any node executes ($\{1{:}10\}$ on both durable backends,
and with the racers on two hosts).
\end{abstract}

\begin{IEEEkeywords}
Conformance testing, formal specification, checkpointing, crash recovery,
exactly-once semantics, idempotence, model checking,
TLA\textsuperscript{+}, interrupts,
human-in-the-loop, workflow frameworks, LLM agents.
\end{IEEEkeywords}}

\maketitle
\IEEEdisplaynontitleabstractindextext
\IEEEpeerreviewmaketitle

\IEEEraisesectionheading{\section{Introduction}\label{sec:intro}}

\IEEEPARstart{A}{ persistence} plane makes a simple promise: a run can
stop --- for a human approval, for a crash, for a preemption --- and
then continue. The promise is old and its failure modes are classical;
what is new is where the promise is now being made. A generation of
workflow frameworks for LLM agents has shipped that machinery to
developers who gate payments, messages, and file writes behind it:
\LG{} checkpoints graph state per superstep and exposes
\texttt{interrupt}/\texttt{Command(resume=...)}; \LI{} serializes a
workflow \texttt{Context} mid-run and restores it; \CA{} persists flow
state and restores runs from checkpoint files. As agents are given
authority over non-idempotent effects, ``continue'' becomes a
correctness question: which completed effects may fire again, what
happens when the same interrupt is answered twice with different
values, what may be persisted, and whether the recovery decision is a
function of durable state. This paper measures whether the machinery
holds.

The gap is not mere absence of specification but incoherence.
None of the evaluated frameworks exposes a machine-checkable contract for
these obligations, and where fragments \emph{are} stated, the frameworks
contradict one another: \CA{}'s checkpointing documentation claims
restoration ``resume[s] without re-running completed
work''~\cite{crewai-ckpt-docs}; \LI{} instructs users to place
\texttt{wait\_for\_event} early and ``make any preceding work safe to
re-execute''~\cite{li-context-docs}; \LG{} memoizes completed
\texttt{@task} results across resume. Three frameworks, three
incompatible answers --- and, as we show, two of the three do not satisfy
even the semantics they themselves state or imply.

The chain from divergence to harm is concrete: a developer who ports a
side-effecting workflow across frameworks cannot locally determine which
discipline is in force --- no type, no signature, no documented property
to consult --- and the GitHub issues this paper reproduces are that gap
made real. We treat the issues as motivation, never measurement; every
behavioral claim is re-established by the deterministic harness of
Section~\ref{sec:method}. The persistence plane lacks what network
protocols and file systems have had for decades: an explicit contract, a
machine-checked model, and a conformance suite. A single contract can
span divergent execution models because they share a boundary ---
checkpoints, interrupts, resume commands, a state-inspection API --- and
every property below is stated over that surface alone; a framework
documenting a weaker discipline is recorded as divergent (\docd{}), not
defective. We make that concrete in four steps.

\textbf{(1) The \contract{}} (Section~\ref{sec:contract}). Six properties
over an abstract resume plane: \emph{prefix continuation} (PC),
\emph{effect exactly-once} (EO), \emph{fork determinism} (FD),
\emph{checkpoint validity} (CV), \emph{consume-once} (CO), and
\emph{recovery determinism} (RD). PC/EO govern what may re-execute; FD/CO
govern the interrupt lifecycle; CV governs what may be persisted; RD ---
motivated by an ordering defect we reported in \LG{}'s synchronous
durability mode (issue \#8039) --- requires the recovery decision to be a
function of durable state. The property set predates the measurements
(Section~\ref{sec:independence}).

\textbf{(2) A machine-checked model} (Section~\ref{sec:formal}). A
TLA\textsuperscript{+} module with six fault switches, each modeling a
violation class observed in a deployed framework. TLC verifies the
reference configuration against all six properties (87 states generated,
59 distinct, no error) and produces a depth 4--6 counterexample per
single-fault configuration. A per-invariant matrix --- every fault
against every property, 39 runs --- maps each fault's full violation
footprint; the footprints are discovered, not stipulated, and the runs
whose entire state space stays clean are exactly the separating models
Section~\ref{sec:independence} needs.

\textbf{(3) A deterministic conformance harness and cross-framework study}
(Sections~\ref{sec:method}--\ref{sec:pilot}). Every probe is a
pure-Python protocol sequence over framework persistence APIs --- no LLM
calls, no timing windows; crashes are exception-based in the matrix and
replicated under a barrier-synchronized \texttt{SIGKILL} (probe~133) ---
with process-local effect counters, cross-checked on the durable-backend
probes by an on-disk external ledger. On the current releases (\LG{}~1.2.9, \LI{}~2.22.2, \CA{}~1.15.2,
pydantic-graph~1.107.1, AutoGen AgentChat~0.7.5) the study finds: a
live fork violation and a now-silent validity violation in \LG{} on
all three checkpointer backends including live PostgreSQL; re-execution
of completed effect-bearing methods under checkpoint restore in \CA{},
against the feature's written claim; documented at-least-once prefix
replay in \LI{}; a crash that defeats pydantic-graph's own resume
entry point; loud rejection of tampered state in AutoGen AgentChat,
alone among probed frameworks; every incomplete persistence boundary
licensing re-execution of completed work (probe~160); consume-once
holding sequentially and failing under cross-process delivery
(probe~159); and no two probed frameworks sharing a conformance
profile. Two probed \LG{} behaviors (\#7361, \#6792) shipped as 1.1.x
regressions and are fixed in 1.2.9 --- absent a stated contract,
semantics drift even within one framework.

\textbf{(4) A repair path with a verified model, shipped}
(Section~\ref{sec:remit}). \sys{} is a reference resume sequencer and
append-only effect ledger beneath existing checkpointer interfaces
(\LG{}'s \texttt{BaseCheckpointSaver} first), delivered at labeled
maturity levels: invariants stated in Verus~\cite{verus} and
machine-discharged, the shipped recovery-decision core a verified
executable function (no end-to-end refinement claimed); a Rust core
behind PyO3 bindings, conformance-checked against the
TLA\textsuperscript{+} transition relation; a validity gate that turns
silent invalid persistence into loud rejection live; and fork
determinism settled by a matched pair --- write-path keying fails, a
read-path fork-intent filter repairs \#6663 on the identical protocol
--- re-established by the packaged decision-free shim with the stock
saver as differential control. This completes the mechanism account of
Section~\ref{sec:derived}.

\paragraph*{Positioning} Two recent systems bracket this paper
without occupying it: below the framework API, Crab~\cite{crab}
checkpoints OS-level sandbox state; above it, DART~\cite{dart} decides
when a mechanically possible rollback is semantically admissible.
Neither asks whether the primitives themselves keep their promises. Consent-integrity
work~\cite{cim} binds an approval to the \emph{content} of the action;
FD/CO govern the approval \emph{lifecycle}. Concurrent framework-testing
work~\cite{logichunter} searches for bugs; we define the contract such
bugs violate, measure conformance, and provide a reference implementation
with a machine-verified model. To our knowledge no prior work combines an
explicit resume contract, a machine-checked model, and cross-framework
conformance measurement.

\emph{What is and is not claimed.} Because the sections below qualify
each claim where it is made, the scope is collected once here.
\emph{Proved}: the FD--CO impossibility without a discriminator
(Lemma~\ref{prop:fdco}); independence of EO, PC, FD, CV, and RD
from the conjunction of the others, and CO's structural dependence on
EO (Proposition~\ref{prop:independence}), each witnessed by a model
checked exhaustively at stated bounds; and the reference conjunction
itself, TLAPS-discharged unbounded over every constant assignment
satisfying the module's assumptions (Sec.~\ref{sec:threats}).
\emph{Machine-checked}: the six
properties over the reference model's full reachable space, re-checked
at the scaled bounds of Table~\ref{tab:tlc}; the per-invariant fault
matrix at both bound sets; the independence witnesses.
\emph{Measured}: conformance on the probed paths of five frameworks at
pinned versions, with the effect oracle a durable cross-process ledger.
\emph{Verified}: \sys{}'s recovery-decision core, an executable
function line-identical to the shipped one under a CI gate; the EO
admission and PC/CV commit cores as verified executable twins bridged
to the shipped implementations by an exhaustive differential suite
(Sec.~\ref{sec:package}) --- not the composite package, and no
refinement to the compiled binary. \emph{Not claimed}: minimality or
completeness of the property set; prevalence rates for any violation;
that the probed frameworks represent the ecosystem; that any framework
fails a property it is not measured on.

\section{The Resume Plane and Its Fragmentation}\label{sec:plane}

\subsection{Primitives}
We use \emph{resume plane} for the subsystem of an agent framework that
(i)~durably records execution progress (\emph{checkpoints}), (ii)~parks a
run pending external input (\emph{interrupts}), and (iii)~continues a run
from durable state (\emph{resume}), whether after an interrupt, a crash, or
an explicit restore. In \LG{} the plane is the checkpointer
(\texttt{InMemorySaver}, \texttt{SqliteSaver}, \texttt{PostgresSaver}) plus
\texttt{interrupt()} and \texttt{Command(resume=...)}, addressed by \texttt{(thread\_id, checkpoint\_id)}; the persistence
documentation gives invocation with a prior \texttt{checkpoint\_id}
two readings --- branch-creating time travel (``fork the graph state at
arbitrary checkpoints'') and replay with interrupts
re-triggered~\cite{lg-persist-docs}. In \LI{} the plane is
\texttt{Context.to\_dict()}/\texttt{from\_dict()} plus the
\texttt{wait\_for\_event} idiom with \texttt{InputRequiredEvent}/%
\texttt{HumanResponseEvent}. In \CA{} it is \texttt{@persist} flow-state
persistence and, since the checkpointing feature, \texttt{CheckpointConfig}
with \texttt{Flow.from\_checkpoint}.

\subsection{Three frameworks, three written semantics}
Table~\ref{tab:frag} summarizes what each framework's own documentation
says about the central question --- does completed work re-execute on
resume? --- for the mechanism a practitioner would reach for, and the
statements are mutually incompatible: \CA{}'s checkpointing claims
completed work is skipped; \LI{} documents that step code before a
durable wait replays and that in-progress steps restart on
restore~\cite{li-hitl-docs}; \LG{} memoizes completed \texttt{@task}
results. A developer porting a side-effecting workflow between
frameworks silently crosses from an exactly-once regime into an
at-least-once regime with no type error, no warning, and no named
property to consult.

\begin{table}[t]
\caption{Documented resume discipline for completed work, per framework
(mechanism a practitioner would use for durable human-in-the-loop or crash
resume). ``Stated'' quotes or paraphrases the framework's own
documentation; Section~\ref{sec:pilot} tests whether behavior matches.}
\label{tab:frag}
\centering
\footnotesize
\begin{tabularx}{\columnwidth}{@{}>{\raggedright\arraybackslash}p{0.32\columnwidth}X@{}}
\toprule
Framework (mechanism) & Stated discipline for completed work on \mbox{resume} \\
\midrule
\CA{}~1.15.2 (\texttt{Checkpoint\allowbreak Config}) &
Exactly-once: restore ``resume[s] without re-running completed
work''~\cite{crewai-ckpt-docs}. \\
\addlinespace[2pt]
\LI{}~2.22.2 (\texttt{wait\_for\_event} + \texttt{Context} serialization) &
At-least-once for step prefixes: the runtime replays the step when the
event arrives; users must ``make any preceding work safe to
re-execute''~\cite{li-context-docs}; in-progress steps restart on
restore~\cite{li-hitl-docs}. \\
\addlinespace[2pt]
\LG{}~1.2.9 (\texttt{@task} + checkpointer) &
Exactly-once for task bodies via memoization of completed task results
across resume; graph-node semantics are checkpoint-granular~\cite{lg-persist-docs}. \\
\bottomrule
\end{tabularx}
\end{table}

\subsection{Version stability of the violations}\label{sec:sweep}
The \LG{} probe executed across five releases --- 1.0.5, 1.1.0, 1.1.3,
1.1.10, 1.2.9 --- to separate stable violations from transient
regressions. The fork violation (\#6663) and the silent-persistence
violation (\#6491 class) reproduce at \emph{every} tested version: stable
properties of the resume plane across a year of releases, not
regressions awaiting a patch. For \#7361 and \#6792, which the trackers
report as regressions in narrow windows, the sweep finds
current-and-adjacent versions clean; those are relied on only as
documented history, never as our own measurements. The evidence file
\texttt{results/regression/} records every cell.

\subsection{Layer positioning}
Figure~\ref{fig:layers} places the contract between the two strong recent
systems in the recovery space. Crab~\cite{crab} operates below the
framework API: an eBPF-instrumented host runtime that checkpoints
OS-visible sandbox state, and whose deployment machinery synthesizes
cached LLM responses so a restored agent does not replay completed
actions --- an implicit acknowledgment, at another layer, of the property
this paper names EO. DART~\cite{dart} operates above the API, certifying
when a rollback is admissible under committed downstream consumers. Both
take the primitive's own semantics as given; the contract layer asks the
prior question: \emph{is the primitive sound?} Mature recovery
disciplines surround this layer and are positioned in
Section~\ref{sec:related}; none specifies or measures the agent-framework
resume API itself.

\begin{figure}[t]
\centering
\begin{tikzpicture}[
  box/.style={draw, rounded corners=2pt, align=center, inner sep=5pt,
              font=\footnotesize, minimum width=0.9\columnwidth},
  lbl/.style={font=\scriptsize\itshape, anchor=west}
]
\node[box, fill=gray!8] (dart)
  {DART~\cite{dart}: rollback \emph{admissibility} over primitives\\
   (committed downstream consumers, effect policies)};
\node[box, below=5pt of dart, fill=blue!6, thick] (rc)
  {\textbf{This paper: the \contract{}}\\
   PC\,/\,EO\,/\,FD\,/\,CV\,/\,CO\,/\,RD over the framework resume API\\
   (checkpointers, interrupts, resume commands)};
\node[box, below=5pt of rc, fill=gray!8] (crab)
  {Crab~\cite{crab}: OS-level sandbox checkpoint/restore\\
   (filesystems, processes, microVMs)};
\end{tikzpicture}
\caption{Layer positioning. Adjacent systems assume the framework resume
primitive is sound; this paper specifies and tests that assumption.}
\label{fig:layers}
\end{figure}

\emph{Comparability and reading guide.} Per framework we probe the
officially documented mechanism for durable human-in-the-loop or crash
recovery (the artifact's selection manifest records adoption figures and
retrieval dates), instantiating the \emph{same} abstract workflow on
each plane --- gate a non-idempotent effect on a human decision, crash
after a durable step --- so divergence between matrix rows is precisely
the hazard a porting developer inherits. Documented semantics (this
section) and measured behavior (Section~\ref{sec:pilot}) are kept
distinct by the classification rule of Table~\ref{tab:matrix} note~b.
Longitudinal depth follows behavioral richness: \LG{} is swept across
five releases; the remaining headline cells are release-swept in the artifact
(\texttt{results/sweep/}, release-date receipts): six \CA{} releases
spanning 2026-04-08 to 2026-07-24, the CheckpointConfig divergence
present at every release where the restore path exists --- coeval with
the feature; three \LI{} and three pydantic-graph~1.x releases, the
parallel 2.x line no longer exposing the probed module. Stable
violations under rapid release churn: nothing converges toward an
implicit norm.

\section{The Resume Contract}\label{sec:contract}

\emph{Design principles.} The property set is not a bug taxonomy
ordered after the fact; they answer, in order, the questions any caller
who needs deterministic resume semantics --- any caller gating
non-idempotent effects on the plane --- must be able to answer when
crossing an interrupt/crash/resume boundary:
\emph{where} does execution resume (PC)? do effects \emph{repeat} (EO)?
what does a \emph{fork} mean (FD)? what may be \emph{persisted} (CV)?
what \emph{consumes} authority (CO)? is recovery a function of
\emph{durable state alone} (RD)? --- plus, one layer down, \emph{how is a
fork asked for} (FI, a protocol obligation). The observed failures of
Section~\ref{sec:pilot} instantiate these questions; they did not
generate them.

\subsection{Abstract model}
\begin{definition}[Resume plane]\label{def:plane}
A \emph{run} executes tasks $1..N$ in order; task $t$ carries one
non-idempotent external effect $e_t$. One task $\mathit{IP}$ is
\emph{interrupt-gated}: its effect fires only after a resume value
$v\in V$ is consumed. Completing task $t$ appends a \emph{checkpoint
record} $\langle t,\mathit{valid}\rangle$ to a durable log and advances
the \emph{durable frontier} $F$ to $\max(F,t)$. A \emph{crash} erases
volatile state; a \emph{recovery} chooses a continuation point as a
function of the durable log. A \emph{resume} addressed to the interrupt
checkpoint carries a value $v_k$ and yields a branch outcome
$o_k$.
\end{definition}

The model deliberately abstracts framework detail --- graph supersteps,
event queues, flow listeners --- to the observable interface a caller
programs against: effects, checkpoints, interrupts, resume values,
outcomes, recovery decisions. One gated task is the minimal nontrivial
instance; composition across gates and parallel branches is treated under
\emph{Scope} at the end of this section and probed in
Section~\ref{sec:pilot}. Each property below is stated over that
interface, so verdicts are decided from surface observables alone;
probes that instrument beneath it (adversarial savers, executor traces,
the source audit) localize mechanisms, never decide cells, and PC's
log-derivation clause is certified, on any finite probe set, through
its observable consequences --- state equality with the effect record
intact --- not by observing provenance.

\subsection{Properties}
\begin{property}[PC: Prefix continuation]\label{p:pc}
Recovery continues from the durably recorded frontier state: execution
after recovery begins in the state $S_F$ recorded at frontier $F$, or in a
state re-derived deterministically from the durable log alone that equals
$S_F$. \emph{Memoized replay conforms}: prefix code may re-run during
recovery provided every prefix effect is served from the durable record
(so EO is preserved) and the re-derived state is a pure function of the
log. Re-deriving state from initial values, or traversing the prefix
against live effects, violates PC. Equality here is \emph{observable-state}
equality, and \emph{observable state} is, throughout this paper, exactly
what is retrievable through the framework's public state-inspection API by
the caller and by subsequent tasks; internal representation is
unconstrained.
\end{property}
\begin{property}[EO: Effect exactly-once]\label{p:eo}
For every task $t$, effect $e_t$ fires at most once on a branch across any
sequence of interrupts, crashes, and resumes (as a safety invariant EO is
at-most-once; the ``exactly'' is supplied by pairing with the liveness
obligation of Sec.~\ref{sec:formal}). EO constrains observable
external effects only, never message delivery: an effect that commits
while its acknowledgment is lost counts as fired, and the retry
discipline for lost acknowledgments is the idempotency-key composition of
Remark~\ref{rem:fork}.
\end{property}
\begin{property}[FD: Fork determinism]\label{p:fd}
If resumes carrying \emph{fork intent} (Definition~\ref{def:fork}) with
values $v_1,\dots,v_m$ are addressed to the same interrupt checkpoint,
then each branch outcome satisfies $o_k = f(v_k)$ for the branch semantics
$f$; in particular $v_j \ne v_1 \Rightarrow o_j \ne o_1$ whenever $f$ is
injective. Here $f$ is the \emph{decision function}: the framework's
routing of the supplied value into the gated branch decision, deterministic
by construction of the gate; model or tool nondeterminism downstream of the
decision is outside $f$, so FD is well-defined for nondeterministic
agents.
\end{property}
\begin{property}[CV: Checkpoint validity]\label{p:cv}
Every persisted checkpoint record satisfies the state schema: a write that
would persist schema-invalid state is rejected with an error, not stored.
\end{property}
\begin{property}[CO: Consume-once]\label{p:co}
Two clauses, named here because the rest of the paper quantifies over
them separately. \textbf{(CO-c, consumption count)} An interrupt is
consumed by at most one resume. \textbf{(CO-e, effect inertness)} A
resume \emph{without} fork intent addressed to a completed run or an
already-consumed interrupt --- including byte-identical re-delivery of a
prior resume --- is inert with respect to effects.
\end{property}
\noindent The clauses are not interchangeable, and the distinction is
load-bearing twice below. CO-e is violated when a stray delivery
\emph{fires} something; CO-c is violated when authority is
\emph{taken} twice, whether or not a second effect follows --- a gate
that serves its effect idempotently from the durable record can consume
one human approval twice while the effect count stays at one, which
leaves the approval trail wrong and the effect ledger right. The
cross-process failure this paper measures (probe~159) is a CO-c failure
that happens also to break CO-e; the model of Sec.~\ref{sec:formal}
formalizes CO-e only, and Sec.~\ref{sec:independence} reports what each
clause is independent of.
\begin{property}[RD: Recovery determinism]\label{p:rd}
The recovery decision (which tasks to skip versus re-execute) is a
function of durable state: two recoveries from identical durable logs make
identical decisions.
\end{property}

\begin{remark}\label{rem:rd}
CO is the effect-restricted corollary of EO at the gated task, named
separately because the interrupt lifecycle is where deployed systems
concentrate human authority, approval trails are audit subjects, and a
framework can satisfy CO while failing EO elsewhere, as the conformance
matrix shows. (The dependence analysis is
Section~\ref{sec:independence}, which also states the set's
non-completeness and names candidate seventh properties; we flag the
relationship here so no independence assumption is carried for three
pages.) RD is motivated by an ordering \emph{hazard} we reported in
\LG{}'s synchronous durability mode (\#8039): a completed task's pending
writes and the superseding checkpoint go to a shared executor with no
ordering barrier, so a crash between them can leave either of two
durable states. Two obligations travel under RD's name.
Property~\ref{p:rd} as stated is determinism proper --- identical
durable logs, identical decisions --- which the model checks
(Listing~\ref{lst:inv}) and probes~118 and 128 test directly, each
recovering \emph{twice} from one committed crashed log
(\texttt{sha256}-verified byte-identical copies on the durable backend),
decisions identical in every pair. The \#8039 hazard additionally
demands \emph{order-invariance}: a crash inside the unbarriered window
is compatible with more than one legal log, and a recovery deterministic
in each yet divergent across the two leaves post-crash behavior
unreproducible. Probes 118, 128, and 136 test that stronger obligation
on the realized crash pair; the Verus recovery core
(Sec.~\ref{sec:remit}) covers the \emph{completed} window only, not
the crash-truncated pair --- which is how \LG{}'s measured invariance
coheres with its crash-path EO violation. RD's inclusion predates any
observed failure.
\end{remark}

\subsection{Fork intent: making FD and CO jointly satisfiable}
\label{sec:forkintent}
FD and CO constrain behavior at the same wire point --- a second resume
addressed to a consumed interrupt --- in opposite directions: FD demands
the new value be honored on a new branch, CO demands a stray re-delivery
be inert. Without a discriminator the two are jointly unsatisfiable on
identical traffic, so the contract does not leave ``explicit fork''
undefined:

\begin{definition}[Explicit fork]\label{def:fork}
A resume carries \emph{fork intent} iff it bears a branch discriminator
distinguishing it from re-delivery of a prior resume: a distinct resume
ordinal, an explicit fork flag, or an address the framework's own
documentation designates as branch-creating.
\end{definition}

\begin{property}[FI: Fork-intent expressibility]\label{p:fi}
The resume API must make the discriminator of Definition~\ref{def:fork}
expressible on the wire. FI is a protocol obligation on the interface
rather than a behavioral property of a trace; an API that cannot express
fork intent forces implementations to resolve retry-versus-fork by
guessing, and cannot satisfy FD and CO simultaneously. FI is numbered
with the properties because the conformance report needs a column for
it, and it is deliberately not model-checked: it constrains what the
API can say, not what traces do.
\end{property}

\begin{definition}[Intent-indexed FD and CO]\label{def:intentpred}
Let an \emph{intent labeling} assign to each resume $w$ an exogenous
label $\iota(w) \in \{\textsc{fork}, \textsc{retry}\}$ --- the
caller's intent, not a field of $w$. $\mathrm{FD}^{\iota}$ holds of a
responder iff every \textsc{fork}-labeled $w$ is honored on a fresh
branch whose gated effect fires exactly once; $\mathrm{CO}^{\iota}$
holds iff every \textsc{retry}-labeled $w$ is effect-inert. Where the
wire carries a discriminator, $\iota$ is recoverable and these coincide
with FD and CO; where it does not, they are strictly stronger. The
matrix of Sec.~\ref{sec:pilot} reports FD and CO only: no probe can
supply $\iota$ either.
\end{definition}

\begin{lemma}[FD--CO incompatibility without a discriminator]
\label{prop:fdco}
On a wire protocol whose resumes carry no branch discriminator --- none
in the accepted traffic, transport metadata included; Remark~\ref{rem:tworeadings}
delimits the scope --- no
responder --- deterministic or randomized --- satisfies both
$\mathrm{FD}^{\iota}$ and $\mathrm{CO}^{\iota}$
(Definition~\ref{def:intentpred}) on all traffic
the protocol admits, for any intent labeling $\iota$ under which both
labels are realized.
\end{lemma}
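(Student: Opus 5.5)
The argument is an indistinguishability argument. The responder's behavior can depend only on what reaches it on the wire. The intent label $\iota$ is exogenous (Definition~\ref{def:intentpred}) and, by hypothesis, is not encoded in the accepted traffic or its transport metadata. So I would construct two executions the responder cannot tell apart, yet to which $\mathrm{FD}^{\iota}$ and $\mathrm{CO}^{\iota}$ assign incompatible obligations.

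First, fix any labeling $\iota$ under which both labels are realized, so some admitted resume is labeled \textsc{fork} and some admitted resume is labeled \textsc{retry}. Since the protocol carries no discriminator, I would argue that the admitted traffic contains two resumes $w_f$ and $w_r$ addressed to an already-consumed interrupt checkpoint, with $\iota(w_f)=\textsc{fork}$ and $\iota(w_r)=\textsc{retry}$. Both arrive after the same wire history $h$ (an initial resume consuming the interrupt) and have identical wire content, which means the same value, the same address, and no ordinal or flag distinguishing them. This is where Definition~\ref{def:fork} enters. Any feature separating $w_f$ from $w_r$ on the wire would itself be a branch discriminator, contradicting the hypothesis. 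Hence the history $h\cdot w$ is the same in both scenarios, and only the exogenous label differs.

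Second, compare the obligations on this common history. $\mathrm{FD}^{\iota}$ requires that on $h\cdot w_f$ a fresh branch is created whose gated effect fires exactly once; in particular it fires with probability $1$. $\mathrm{CO}^{\iota}$ requires that on $h\cdot w_r$ the responder is effect-inert, so the gated effect fires with probability $0$. A randomized responder is a map from wire histories to distributions over behaviors. Because $h\cdot w_f$ and $h\cdot w_r$ are the same input, the probability $p$ that the gated effect fires is the same in both cases. We would need $p=1$ and $p=0$ at once, which is impossible. The deterministic case is the special case $p\in\{0,1\}$. Therefore no responder satisfies both predicates on all admitted traffic.

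The main obstacle is the first step: showing that "both labels realized" together with "no discriminator" actually yields such a wire-identical pair at a consumed interrupt. The label could in principle be realized only on traffic that differs in some incidental way, such as a different value or a different address. My plan is to read "no discriminator in the accepted traffic" as saying that the wire content of any resume is not a function of its intent. Then a labeling realizing both labels on a given wire message is admissible, since $\iota$ ranges freely over exogenous assignments. I would defer the precise scope of this reading, including whether transport metadata or a documented branch-creating address counts, to Remark~\ref{rem:tworeadings}, and state the lemma's quantifier over labelings accordingly.
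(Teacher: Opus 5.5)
Your proposal is correct and is essentially the paper's proof. Both build two executions the responder cannot distinguish, differing only in the exogenous label, where $\mathrm{FD}^{\iota}$ demands the gated effect fire and $\mathrm{CO}^{\iota}$ demands it not, so the success events are disjoint (the paper draws the slightly stronger conclusion that a randomized responder errs with probability at least $1/2$ on one intent, and it avoids your ``main obstacle'' by constructing the pair directly as a transport duplicate of $\mathit{resume}(v_1)$ versus a same-value fork, holding durable state, local schedule, and clocks fixed rather than assuming the responder reads only the wire, and leaving admissibility of both intents on identical traffic to Remark~\ref{rem:tworeadings}, as you do).
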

\begin{proof}
Let interrupt $c$ be consumed with recorded value $v_1$, branch outcome
$o_1=f(v_1)$, the branch's effect fired once. Construct two executions
identical in durable state, wire trace, local schedule, and environment
inputs (clocks included) up to and including the arrival of $w = \mathit{resume}(v_1)$ addressed to $c$: in
(a), $w$ is a transport duplicate, for which CO requires
effect-inertness; in (b), $w$ is a caller-requested fork re-answering
$v_1$, for which FD with per-branch EO requires a fresh branch whose
gated effect fires exactly once. Every input the responder can condition
on is identical across (a) and (b); only the caller's intent, which by
hypothesis the wire cannot carry, differs. A deterministic responder
produces one behavior for both and violates one requirement. A
randomized responder induces one distribution for both; the two required
behaviors are disjoint events, so its success probabilities sum to at
most one and on at least one intent it errs with probability
$\ge 1/2$. Contradiction. (The construction is an indistinguishability
argument: the two executions are observationally equivalent to the
responder, differing only in an exogenous label the wire cannot carry.)
\end{proof}

\begin{remark}\label{rem:tworeadings}
Scope. Transport metadata is inside the construction: any request id,
retry token, or idempotency key the accepted traffic exposes \emph{is}
a discriminator in the sense of Definition~\ref{def:fork}, making FI
hold and the lemma inapplicable; the impossibility concerns protocols
whose accepted traffic carries no such bit --- the probed \LG{}
surface. And the construction presupposes both intents are
\emph{admissible}: an environment that excludes duplicates or forbids
same-value forks escapes by emptying one side --- a fact about the
environment, not the responder; on the probed local APIs both arrive as
caller invocations.
\end{remark}

Lemma~\ref{prop:fdco} is deliberately \emph{not} an explanation of
\LG{}'s defect. Its role is prior: it is FI's necessity in theorem
form --- demanding FD without demanding FI would be incoherent. \LG{}
is the harder second case: its documentation gives the
explicit-checkpoint address \emph{two} readings --- branch-creating
time travel and replay (Sec.~\ref{sec:plane}) --- so the address meets
Definition~\ref{def:fork} clause~3 on the first reading and fails to
discriminate on the second; FI holds on at most one documented reading,
the dual documentation is itself an instance of the gap FI names, and
the loop serves the recorded value under either reading (FD fails) ---
a factorization that still localizes the defect to the serving logic,
exactly the seam probe~134's repair exploits. Sufficiency is claimed
only constructively: with a discriminator present, a responder keying
branches on it satisfies FD jointly with same-value replay idempotence
--- machine-checked at model level (LGF-B, Table~\ref{tab:tlc}); LGF is
an outcome-serving model with no effect state, so CO's effect half is
exercised instead by \sys{}'s per-branch ledger
(Sec.~\ref{sec:package}).

FI is what \sys{}'s $\langle\mathit{checkpointId},
\mathit{resumeIndex}\rangle$ keying supplies and what the probed \LG{}
protocol lacks: neither \texttt{Command(resume=v)} nor the resume-map
form carries a retry/fork bit, and the address the probe uses --- a prior
\texttt{checkpoint\_id} --- is the one the persistence documentation
presents as the branch-creating time-travel
primitive~\cite{lg-persist-docs}. The \no{} verdict in
Section~\ref{sec:pilot} therefore stands under either documented reading,
while FI names the protocol gap that made the ambiguity possible.

\subsection{Partial independence (interface-relative) and empirical
necessity}\label{sec:independence}
Whether six is the right number is answered by a per-invariant fault
matrix --- each \texttt{ResumeContract} fault model checked against
every property, plus the state-rebuild module; 39 TLC runs,
Table~\ref{tab:ix} --- whose clean rows are precisely the separating
models a logical-independence argument requires, completed by a
companion module supplying the three witnesses the deployed-mechanism
switches entangle (21 further runs, \texttt{R10\_Separations.tla},
every verdict, state count, and counterexample depth identical across
the two environments). The 39-cell matrix is re-derived at the R8
constants with the same verdict in every cell (receipts in
\texttt{results/\allowbreak tla/\allowbreak independence\_r8/}), so
the separations are not an artifact of the small configuration:
counterexamples deepen as the space grows --- the fork fault from
depth~5 to~8, double consumption from~6 to~13 (single-worker minimal
depths, receipts in
\texttt{results/\allowbreak tla/\allowbreak singleworker\_20260806/};
the matrix files log 16-worker abort-time traces, which can run
deeper) --- while every clean
cell stays clean over a state space four orders of magnitude larger. Three scope words govern the title: the separations are
\emph{interface-relative} --- established over the vocabulary of
Definition~\ref{def:plane}, the domain over which the contract itself is
stated, and the relativity every independence claim has to its signature
--- \emph{witness-complete} (each witness fully
explored at both bound sets, every holding cell at R8 exhausting
$1.0$--$1.4{\times}10^{6}$ distinct states against the reference run's
$59$--$396$), and \emph{partial}, because one dependence (CO on EO) is
structural.

One distinction carries the proof-theoretic weight and is stated once,
here. A separation claim is the non-implication $S \nvdash P$, and
non-entailment is proved, in any logic, by exhibiting one model of
$S \wedge \neg P$: a structure over the contract's interface in which
the conjunction holds and the target fails. A single fully checked
finite witness is such a model and settles the claim outright. TLC's
exhaustive breadth-first enumeration of a witness's reachable space is a
complete verification of that witness --- there is no ``beyond the
bound'' for the structure its constants define --- so the
non-implications of Proposition~\ref{prop:independence} are established
by exhibition, never sampled, and the R8 re-derivation is corroboration
that the witnesses are not degenerate configurations, not the source of
their validity. What is genuinely bound-relative is the universal side:
that the reference semantics satisfies the conjunction (R0/R8) and that
each fault's discovered footprint is complete are claims over all
behaviors, exhaustive at the stated constants and silent beyond them.
TLAPS now discharges the first unbounded
(Section~\ref{sec:threats}); footprint completeness retains exactly
this bound-relative status, and no TLAPS step is needed to make the
exhibited witnesses proofs.

\begin{proposition}[Partial independence, machine-checked]
\label{prop:independence}
Over the interface of Definition~\ref{def:plane}, as formalized in
Section~\ref{sec:formal}: (i)~FD is
logically independent of the conjunction of the other five properties:
the \texttt{ForkIgnore} model satisfies EO, PC, CV, CO, and RD over its
entire reachable state space at the reference bounds (TLC complete, no
error) while violating FD.
(ii)~CV is likewise independent, witnessed by \texttt{InvalidPersist} ---
a structurally easy separation, and deliberately so: no other invariant
reads the validity bit because the contract states CV over the write,
so its in-model independence is definitional, while the coupling that
matters in deployment --- invalid state consumed downstream --- is an
empirical question the downstream-consequence probe answers (probe~150,
Sec.~\ref{sec:pilot}).
(iii)~PC is implied by no single property in the set, on non-vacuous
witnesses: EO does not imply PC --- the state-rebuild model (R7)
satisfies EO over its entire state space while violating PC --- and none
of FD, CV, CO, or RD implies PC --- the \texttt{PrefixReplay} model of
(iv) violates PC while all four hold, non-vacuously, over its entire
state space. R7's remaining cells are vacuous (machinery absent);
vacuous cells separate nothing and Table~\ref{tab:ix} labels them as
such. (iv)~CO \emph{splits}, and only one clause is dependent.
\textbf{CO-e is not independent of EO}: as formalized in
\texttt{ResumeContract.tla} the invariant is
$\mathit{effects}[\mathrm{IP}] \le 1$, which is EO restricted to the
gated task, so $\mathrm{EO} \Rightarrow \mathrm{CO\text{-}e}$ and every
CO-e violation is an EO violation --- the matrix confirms this, and we
state plainly that this implication is a fact about the two formulas
rather than a discovery: it would hold whatever the transition relation
did. \textbf{CO-c is independent of the conjunction of all six
others}, including CO-e. The witness is a lost-update model
(\texttt{R11\_ConsumeCount.tla}, receipts in
\texttt{formal/tla/consumecount/}) that carries a consumption counter and
admits a second racer which reads the parked interrupt's waiting flag
before the first racer's clearing write lands. Its reference cell
reproduces R0 exactly (87 generated / 59 distinct); conservativity
holds by construction --- the racer disabled, the counter a function of
the existing trace, the action alphabet R0's --- with the 87/59 identity
as that construction's executable check; with the race enabled and the
gate serving its
effect idempotently from the durable record, TLC completes over 127
generated / 95 distinct states with TypeOK, EO, PC, FD, CV, CO-e, and RD
holding and CO-c violated at depth~5, and the same verdict pattern
re-derives at wider bounds (8{,}191 distinct states, counterexample
depth~6). This matters beyond bookkeeping: CO-c is the clause the
measured cross-process failure breaks, and until it is carried as an
invariant the model cannot express the paper's own headline composition
result. The CO-e implication is
one-directional, with an in-model separating witness for the converse:
the \texttt{PrefixReplay} model --- recovery restarts from task~1 while
the gated task's effect is served from the durable record, the
memoized-gate discipline of \LG{}'s measured crash path --- violates EO
(at a non-gated task) and PC while FD, CV, CO, and RD hold over its
entire reachable state space (TLC complete, 287 generated / 183 distinct
states); the gate-exercising crash-path trace itself is witnessed by the
shipped R9W configuration (consume; crash at frontier IP; replay; gate
served from the record). (v)~RD, PC, and EO are each independent of the
conjunction of the other five, witnessed by a companion module
(\texttt{R10\_Separations.tla}) whose three switches are effect-safe or
control-safe variants of mechanisms this study observed, each
constructed so that exactly one property fails while the other five
hold over the entire reachable state space (TLC complete, no error, at
the reference bounds): \texttt{Regate} --- at equal durable state one
recovery continues past the consumed gate and another re-arms it, the
re-consumption served from the durable record, so the recovery decision
differs (RD \no{}) with no effect fired twice and no completed task
re-executed; \texttt{Rebuild} --- deterministic restart with the
working state taken from initial values rather than re-derived from the
log, every prefix effect served from the record (PC \no{}); and
\texttt{Redeliver} --- at-least-once re-issue of the durable frontier
task's effect while control resumes past it and the gated task is
excluded (EO \no{}). Within the original switch set none of the three
separates, because nondeterministic recovery's footprint is
$\{$EO, PC, CO, RD$\}$ --- its replay branch re-executes the prefix ---
and that entanglement is itself the finding recorded below. In sum:
independence from the \emph{conjunction} of the other properties is
established for EO, PC, FD, CV, RD, and CO-c; CO-e alone is
definitionally dependent, by (iv).
\end{proposition}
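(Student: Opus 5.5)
The proof proceeds by exhibition, following the distinction drawn just before the statement. Each non-implication $S \nvdash P$ is settled by one finite model of $S \wedge \neg P$ over the vocabulary of Definition~\ref{def:plane}, and each such model is checked exhaustively. The single implication, CO-e from EO, is settled by inspecting the two formulas.

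The first step is to fix the witnesses and check them in a uniform format. Each witness is one configuration of \texttt{ResumeContract.tla} (\texttt{ForkIgnore}, \texttt{InvalidPersist}, \texttt{PrefixReplay}), of the state-rebuild module R7, of \texttt{R10\_Separations.tla} (\texttt{Regate}, \texttt{Rebuild}, \texttt{Redeliver}), or of \texttt{R11\_ConsumeCount.tla}. For each witness I run TLC with every property as an invariant and record four things: completion with no error on the properties that must hold, a counterexample on the target property, and the generated and distinct state counts. Then I read off the clauses:
\begin{itemize}
\item (i) and (ii) come from the \texttt{ForkIgnore} and \texttt{InvalidPersist} rows of Table~\ref{tab:ix}.
\item (v) comes from the three R10 switches, each violating exactly one of RD, PC, EO while the other five hold.
\item (iii) combines R7, which gives EO true and PC false, with \texttt{PrefixReplay}, which gives FD, CV, CO, RD true and PC false.
\item For (iv), CO-c independence comes from R11 with the race enabled. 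The converse separation for CO-e comes from \texttt{PrefixReplay} again, which keeps CO-e while violating EO at a non-gated task.
\end{itemize}

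Next I handle vacuity and conservativity, since a separation is only meaningful on a non-vacuous witness. For every cell used as evidence that a property holds, I check that the property's antecedent is actually reachable. For example, in \texttt{ForkIgnore} a second resume must be reachable for CO to be tested, and in \texttt{PrefixReplay} the crash path must actually reach the gate, which the R9W trace confirms. Cells whose machinery is absent, such as the remaining R7 cells, are labelled vacuous and discarded. For R11 I argue conservativity syntactically: with the racer disabled, the counter is a function of the trace and the action alphabet is R0's. The 87/59 state-count identity then serves as an executable check that the disabled configuration coincides with R0. Without this check the CO-c witness could be separating a different specification rather than extending the contract.

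The dependence half of (iv) needs no model checking. As formalized, CO-e is $\mathit{effects}[\mathrm{IP}]\le 1$, which is the instance $t=\mathrm{IP}$ of EO's universally quantified bound. So $\mathrm{EO}\Rightarrow\mathrm{CO\text{-}e}$ follows by universal instantiation, and it holds for any transition relation. The matrix merely corroborates it.

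I expect the main obstacle to be (v), and the R10 constructions in particular. The original fault switches have entangled footprints: nondeterministic recovery violates EO, PC, CO and RD together, because its replay branch re-executes the prefix. So for each of RD, PC and EO I must design a mechanism that breaks exactly that property while serving every other effect from the durable record.
\begin{itemize}
\item For RD, I would let recovery nondeterministically either continue past or re-arm the consumed gate at equal durable state, with re-consumption memoized. The decision then differs while effects and prefix execution stay unchanged.
\item For PC, I would restart from initial values while serving prefix effects from the record.
\item For EO, I would re-issue only the frontier task's effect while control advances and the gate is excluded, so that CO-e survives.
\end{itemize}
For each construction the TLC run is the final arbiter. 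I would re-check it at the R8 constants only as corroboration that the witness is not degenerate, because its validity already follows from the complete exploration of its own finite state space.
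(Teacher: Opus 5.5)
Your proposal is essentially the paper's own argument. Each non-implication is settled by exhibiting one exhaustively TLC-checked finite witness (\texttt{ForkIgnore}, \texttt{InvalidPersist}, R7 together with \texttt{PrefixReplay}, the three R10 switches, and R11), the CO-e dependence follows by instantiating EO's bound at $t=\mathrm{IP}$, and R8 serves only as corroboration. Two execution details need tightening: the ``holds over the entire reachable space'' verdicts must come from complete runs separate from the counterexample run (the paper does one TLC run per cell, since TLC halts at the first violated invariant), and the R11 witness must be the idempotent-gate race configuration, because the effect-firing race has footprint $\{\mathrm{EO},\mathrm{CO\text{-}e},\mathrm{CO\text{-}c}\}$ and so does not separate CO-c from the conjunction.
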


Two comments. CO is retained as a named property despite (iv) because it
isolates a distinct production mechanism --- stray duplicate delivery
versus crash replay --- and the framework data exhibit the same one-way
structure: \LG{}~1.2.9 fails EO on the crash path while CO holds on the
same configuration (Table~\ref{tab:matrix}). And the entanglement (v)
records is itself a finding: recovery indeterminism whose branch
includes replay drags replay's damage along --- exactly what the \CA{}
restore receipt exhibits live --- which is why the witnesses are
effect-safe variants: they show logical independence, not that any
framework fails one property at a time.

The empirical side carries its own weight. For each of EO, FD, CV, CO,
and the liveness obligation there is an observed framework \emph{path}
on which that property fails while the others hold on that path; every
observed PC violation (\CA{}'s rebuild-from-initial restore) co-occurs
with an EO violation, so at framework level PC's necessity is claimed
jointly, with the model-level separation in
Proposition~\ref{prop:independence}(iii). Three claims, kept distinct:
(i)~empirical necessity, per the observed paths above; (ii)~formal
partial independence --- Proposition~\ref{prop:independence};
(iii)~no theorem of minimality, completeness, or sufficiency is claimed.
The set is not reverse-engineered from the failure list: RD entered from
the recovery question with no observed failure then or now. CO's statement is
likewise unchanged from the pre-measurement set; what measurement
sharpened is which path evidences it: the sequential stray
resume that motivated the property is refused by the probed plane, while
the same delivery from a second process is admitted (probe~159), so the
property earns its empirical necessity on the concurrent path.

\emph{Why six --- and could there be a seventh?} One property per
question a caller must be able to answer across the boundary, with FI
the protocol precondition the fork question needs. Merging is blocked by the separations above; splitting further is
unmotivated by any observed mechanism.
Candidate sevenths are named as scope exclusions: cross-version
checkpoint-migration validity, an effect-visibility ordering obligation,
and loud-versus-silent disposition of discarded resumes
(Section~\ref{sec:pilot} flags the last as a spec-level gap).

\emph{Scope and composition.} Definition~\ref{def:plane} linearizes: the
properties are stated per causal chain, and a DAG's parallel branches are
each such a chain; sequential composition across gates and concurrent
fan-out at one superstep are both probed (Sec.~\ref{sec:pilot},
probes~138, 141), while deeper nesting and multi-checkpointer fan-in
remain future work. Invisible speculative execution that never surfaces
through the public state API is outside the contract. CV is per schema
version, and checkpoint \emph{migration} is a named exclusion: records
valid under schema $v$ are not CV subjects under $v{+}1$, so evolution
requires an explicit migration step before any resume --- a plane that
silently auto-migrates reintroduces exactly the hazard CV names
(\sys{}'s gate answers relative to the validator supplied at load). CO's
resume identity is the checkpoint address plus payload; transport
metadata is excluded. The properties are not claimed orthogonal --- a CV
failure undermines the addressability CO depends on --- so conformance
is reported per-property, never as a single bit.

\begin{remark}[What the conjunction buys, informally]
Under Definition~\ref{def:plane}: EO and CO fix effect multiplicity, PC
the continuation state, RD the recovery choice, CV log integrity, and
FD with FI the branch outcomes --- jointly, the caller-visible outcome is
a function of the inputs, the durable log, and the supplied decisions.
Orientation, not a theorem; R0/R8 machine-check the conjunction over the
model.
\end{remark}

\begin{remark}[Fork and exactly-once]\label{rem:fork}
EO is per-branch by design: a fork is user-requested duplication of
downstream execution, so the gated effect legitimately fires once per
requested branch. The contract forbids duplication nobody asked for ---
re-execution within a branch across crash or resume (EO), and effects
fired by resumes that requested nothing (CO). A deployment for which even
requested branch effects are unacceptable composes FD with an idempotency
key at the effect itself; the contract governs the plane, not the payment
processor.
\end{remark}

\begin{remark}[Relation to classical semantics]
EO restricted to crash recovery is the workflow analogue of exactly-once
\emph{processing} (state-management, not delivery~\cite{flink};
throughout, EO is an effect guarantee, never message delivery); EO/CO
restate the application-level idempotency discipline Helland argues is
unavoidable~\cite{helland-cidr,helland-idem}; PC's memoized-replay clause
is deliberately the discipline formalized for Durable
Functions~\cite{durablefn} and practiced by event-sourced
actors~\cite{akka}, so replay-based recovery \emph{conforms} to PC while
\CA{}'s rebuild-from-initial restore does not; RD is determinism of the
recovery function in the sense of rollback recovery~\cite{elnozahy}. The
contribution is not these notions but their assembly into a checkable
obligation set for an API where today they are unstated, divergent, and
violated.
\end{remark}

\section{Machine-Checked Model}\label{sec:formal}
The object of verification throughout this section is the
\emph{specification}, not any framework: TLC exhausts the reference
semantics and its fault variants, while the \LG{}-derived module of
Section~\ref{sec:derived} is a separately flagged, expert-established
abstraction. The model inherits the contract's scope exclusions
(Section~\ref{sec:contract}).

\subsection{The \texorpdfstring{TLA\textsuperscript{+}}{TLA+} module}
We formalize Definition~\ref{def:plane} and
Properties~\ref{p:pc}--\ref{p:rd} as a TLA\textsuperscript{+}
module, \texttt{ResumeContract.tla} (251 lines, in the artifact), paired
with a consumption-counting companion, \texttt{R11\_ConsumeCount.tla},
that carries the CO-c invariant the base module's atomic
\texttt{Consume} cannot express (the pairing is motivated where that
gap is stated, below). State
comprises the program counter, per-task effect counters, the checkpoint
log, the durable frontier, the interrupt flag, the consumed value, the
sequences of fork values and fork outcomes, the crash count, the recovery
history (pairs of durable frontier and decision), and a prefix-regression
flag. Actions are \texttt{ExecTask}, \texttt{EmitInterrupt},
\texttt{Consume(v)}, \texttt{ForkResume(v)}, \texttt{CrashRecover}, and
\texttt{ExtraResume}. The six properties are state invariants; the
listing shows four of them verbatim.

\begin{lstlisting}[caption={Contract invariants in
\texttt{ResumeContract.tla} (excerpt).},label={lst:inv}]
EffectExactlyOnce ==
  \A t \in Tasks : effects[t] <= 1
ForkDeterminism ==
  \A k \in 1..Len(forkOuts) :
    forkOuts[k] = f(forkVals[k])
CheckpointValidity ==
  \A k \in 1..Len(ckpts) : ckpts[k].valid
RecoveryDeterminism ==
  \A i, j \in 1..Len(recHist) :
    recHist[i].dur = recHist[j].dur
      => recHist[i].dec = recHist[j].dec
\end{lstlisting}

Six Boolean fault switches select the reference semantics or one observed
violation class --- each switch transcribes a deployed mechanism, not a
negation of a target property, and Table~\ref{tab:ix} then measures what
each mechanism actually breaks: \texttt{FaultReplay} (recovery restarts from task~1 over
restored state --- the \CA{} restore discipline, and the \#7361 regression
class), \texttt{FaultForkIgnore} (a second resume at the same checkpoint
is answered with the first branch's outcome --- \#6663),
\texttt{FaultInvalidPersist} (a schema-invalid completion record is
persisted silently --- the \#6491 class), \texttt{FaultNondetRecovery}
(skip-versus-re-execute unconstrained at equal durable state --- \#8039),
\texttt{FaultDoubleConsume} (a stray resume on a completed run
re-fires the gated effect --- the \#2315 class~\cite{copilotkit2315}),
and \texttt{FaultPrefixReplay} (recovery restarts from task~1 while the
gated task's effect is served from the durable record --- memoized-gate
prefix replay, the \LG{}~1.2.9 crash-path class of probes~118/133;
added after those measurements). A shipped witness configuration (R9W) checks the negation
of the gate-exercised state, with a \texttt{recHist} conjunct forcing
consumption \emph{before} the crash; TLC's depth-7 counterexample is the
memoized-gate crash path realized end to end --- the interrupt is consumed
(frontier reaches IP), the crash recovers from \texttt{dur = IP}, prefix
replay re-executes task~1 (the EO violation), and the replayed pass crosses
the gate with the gated effect served from the durable record,
\texttt{effects[IP]} unchanged on the trace.

\subsection{TLC results}
Table~\ref{tab:tlc} reports the verification matrix (TLC~2026.04.09,
rev.~389440f, one
worker, $N{=}3$ tasks, $\mathit{IP}{=}2$, $|V|{=}2$, at most two resumes,
one crash --- two for the RD run --- and one stray resume). The bound is
a small-scope argument: every property quantifies over per-task phases
relative to the interrupt point and over recorded-versus-supplied values,
so three tasks with $\mathit{IP}{=}2$ realize every phase relationship
the transition relation admits, and two values every equality pattern; each violating run finds its
counterexample at depth $\le 7$ at these bounds; and R8 re-checks the same invariants at
ten tasks over $7.4{\times}10^6$ distinct states, unchanged.
Single-worker BFS makes every counterexample minimal-depth and every
number bit-reproducible; \texttt{reproduce.sh} re-derives the rows. The
reference configuration satisfies all six invariants; each single-fault
configuration violates its target at depth 4--6 (R1--R5); the
per-invariant matrix of Table~\ref{tab:ix} then reports each fault's
\emph{full} violation footprint.

\begin{table}[t]
\caption{TLC verification matrix for \texttt{ResumeContract.tla}
(TLC~2026.04.09, rev.~389440f; every fault row single worker, so
counterexample traces are BFS-minimal and deterministic across hosts
(R8-F receipt:
\texttt{results/\allowbreak tla/\allowbreak singleworker\_20260806/});
the clean scaled row R8 ran 4 workers, which cannot alter a completed
exhaustive search's totals; states = generated/distinct; CE =
counterexample depth). R0 checks all six invariants;
R1--R5 and R9 check the targeted invariant under a single fault switch
(full per-fault violation footprints: Table~\ref{tab:ix}); R6 checks the liveness
obligation under weak fairness; LGF-A/LGF-B check the framework-derived
model of Section~\ref{sec:derived} with no fault switches; R9W's
counterexample is the by-design gate witness.}
\label{tab:tlc}
\centering
\scriptsize
\setlength{\tabcolsep}{3pt}
\begin{tabularx}{\columnwidth}{@{}l>{\raggedright\arraybackslash}Xllr@{}}
\toprule
Run & Fault switch & Invariant & Result & States; CE \\
\midrule
R0 & (reference) & all six & \yes{} no error & 87/59; --- \\
R1 & \texttt{Replay} & EO & \no{} violated & 7/7; 4 \\
R2 & \texttt{ForkIgnore} & FD & \no{} violated & 10/10; 5 \\
R3 & \texttt{InvalidPersist} & CV & \no{} violated & 8/8; 5 \\
R4 & \texttt{NondetRecovery} & RD & \no{} violated & 10/10; 4 \\
R5 & \texttt{DoubleConsume} & CO & \no{} violated & 20/20; 6 \\
R9 & \texttt{PrefixReplay} & EO & \no{} violated & 7/7; 4 \\
R9W & \texttt{PrefixReplay} & gate witness & witness CE & 45/35; 7 \\
\addlinespace[2pt]
R6 & (reference) & liveness & \yes{} no error & 87/59; --- \\
\addlinespace[2pt]
LGF-A & (as implemented) & FD & \no{} violated & 5/5; 3 \\
LGF-B & (fork-keyed) & FD + idem. & \yes{} no error & 7/7; --- \\
\addlinespace[2pt]
R8 & (ref., scaled) & all six & \yes{} no err. & 14.7M/7.4M; --- \\
R8-F & \texttt{ForkIgnore} (sc.) & FD & \no{} violated & 68/68; 8 \\
\bottomrule
\end{tabularx}
\end{table}

\begin{table}[t]
\caption{Per-invariant fault matrix: each single-fault model checked
against each property individually (one TLC run per cell; 36 cells on
\texttt{ResumeContract.tla} with the corresponding R1--R5/R9 constants,
plus the R7 state-rebuild module row --- EO, PC, and the type invariant
checked; the fork/validity/consume/recovery machinery is absent, so those
cells are structurally vacuous --- 39 runs total, receipts in
\texttt{formal/tla/independence/}). \no{} = violated (counterexample
depth in parentheses where recorded); \yes{} = the invariant holds over the faulty
model's \emph{entire} reachable state space (TLC complete, no error).
Rows whose off-target cells are all \yes{} are separating models for
their targeted property; off-diagonal \no{} cells are each fault's
discovered footprint.}
\label{tab:ix}
\centering
\scriptsize
\setlength{\tabcolsep}{4.5pt}
\begin{tabular}{@{}lcccccc@{}}
\toprule
Fault model & EO & PC & FD & CV & CO & RD \\
\midrule
\texttt{Replay} (R1 consts) & \no{}\,(4) & \no{}\,(4) & \yes{} & \yes{} & \no{}\,(7) & \yes{} \\
\texttt{ForkIgnore} (R2) & \yes{} & \yes{} & \no{}\,(5) & \yes{} & \yes{} & \yes{} \\
\texttt{InvalidPersist} (R3) & \yes{} & \yes{} & \yes{} & \no{}\,(5) & \yes{} & \yes{} \\
\texttt{NondetRecovery} (R4) & \no{} & \no{} & \yes{} & \yes{} & \no{} & \no{}\,(4) \\
\texttt{DoubleConsume} (R5) & \no{}\,(6) & \yes{} & \yes{} & \yes{} & \no{}\,(6) & \yes{} \\
\texttt{PrefixReplay} (R9) & \no{}\,(4) & \no{}\,(4) & \yes{} & \yes{} & \yes{} & \yes{} \\
\texttt{StateRebuild} (R7 mod.) & \yes{} & \no{}\,(3) & \multicolumn{4}{c}{(vacuous: machinery absent)} \\
\bottomrule
\end{tabular}
\end{table}

Table~\ref{tab:ix} answers two objections at once. To the charge that
the fault switches are circular --- engineered to violate their
properties --- the matrix replies that a switch injects a
\emph{mechanism} and TLC discovers its footprint: replay breaks EO, PC,
\emph{and} CO; injected recovery nondeterminism drags replay's damage
plus RD; only the fork and validity mechanisms break exactly one
property each. If the switches merely stipulated their targets, every
off-diagonal cell would be clean; four rows are not. To the demand for
independence, the clean rows deliver it: \texttt{ForkIgnore} and
\texttt{InvalidPersist} are full separating models
(Proposition~\ref{prop:independence}), the state-rebuild row separates
PC from EO, \texttt{PrefixReplay} separates EO from CO in the only
possible direction, and the one pair that cannot come apart is reported
as the structural implication it is. The full matrix replicated across
hosts with identical verdicts, depths, and state counts. The invariants
read as one-liners by design; the proof burden lives in the six actions
whose every interleaving must preserve them, in
\texttt{ResumeContract.tla}.

The liveness row closes a vacuity hole: a framework that refuses every
resume satisfies all six safety invariants, so the contract pairs them
with \texttt{EventuallyCompletes} under weak fairness, verified on the
reference configuration (and violated in spirit by a deployed framework
whose crash leaves persistence unrestorable, Section~\ref{sec:pg}).

The R1 counterexample is worth reading against the live data: execute
task~1 ($\langle1,0,0\rangle$, frontier~1 durable); crash and replay;
execute task~1 again ($\langle2,0,0\rangle$) --- EO violated in four
states. Section~\ref{sec:pilot} measures the same arithmetic live in
\CA{}: task~1 persisted, crash in task~2, resume, and the completed
task re-fires, landing the counter on 12 where exactly-once predicts
11.

\emph{Modeling assumptions and what the checking means.} Scheduler state
and message plumbing are abstracted --- the abstraction the properties
themselves quantify over. The fault switches inject mechanisms observed
in deployed frameworks and Table~\ref{tab:ix} discovers what each
breaks; none of this shows the specification captures any framework, a
burden that falls on Section~\ref{sec:derived} and the measurements.
The configuration is a dial, not a ceiling: the reference space is the
minimal one in which every property is falsifiable, while R8 (10 tasks,
interrupt at 5, 4 values, 5 resumes, 4 crashes, 4 extra resumes) passes
the six invariants plus \texttt{TypeOK} over $1.47{\times}10^{7}$
generated / $7.4{\times}10^{6}$ distinct states at depth 24 --- an
exhaustive breadth-first check of the full reachable space at those
constants --- and the fork fault still yields a counterexample at
depth~8 under single-worker search. (Workers race to counterexamples on
scaled fault runs, so only the violated invariant is a stable receipt
there; every depth reported here is a single-worker depth.)

Three scope bounds of the model are stated rather than discovered.
\emph{First}, the modeled crash is a recovery-decision event:
\texttt{CrashRecover} carries the precondition that no interrupt is
pending, so a crash while the run is parked awaiting the human is outside
\emph{this} module's transition relation. That case is covered
empirically by probes~158/158b/158c on all three planes that park, and
formally by a companion module, \texttt{ResumeContractParked.tla}: under
durable parking --- the measured behavior of every plane that parks ---
all six invariants and the liveness obligation hold at both bound sets
and the fault matrix re-derives with verdicts and depths identical to
Table~\ref{tab:ix}; under volatile parking the six safety invariants
hold while \texttt{EventuallyCompletes} is violated --- safety by
deadness, pydantic-graph's disposition (Sec.~\ref{sec:pg}), now a
model-level counterexample and not only a measured cell (receipts in
\texttt{formal/tla/parked/}, both environments).

\emph{Second}, fork branches are outcome-level: \texttt{ForkResume}
records values and outcomes and fires no effects, so Remark~\ref{rem:fork}'s
per-branch accounting is a property of \sys{}'s ledger
(Sec.~\ref{sec:package}), not of this module. \emph{Third}, and the one
that matters, CO-c is \emph{not representable here at all}:
\texttt{Consume(v)} clears \texttt{waiting} atomically, so a second
consumption of a live parked interrupt is not a behavior this transition
relation admits, and the module carries no consumption counter. That is a
real gap, not an abstraction choice, because the cross-process failure of
Sec.~\ref{sec:pilot} is exactly that second consumption; and
\texttt{FaultDoubleConsume} is not its shadow, being guarded by
$\mathit{pc} = \mathit{NTasks}+1$ --- a \emph{completed} run, the
sequential mechanism the probed plane refuses. The companion module
\texttt{R11\_ConsumeCount.tla} closes it: a consumption counter, CO-c and
CO-e as separate invariants, and one extra racer reading the waiting flag
before the first racer's clearing write lands --- probe~159's two-racer
shape, the racer served its own value (FD intact, as measured) and
appending no checkpoint (the duplicate invisible in framework state, as
measured). Its reference cell reproduces R0 exactly (87 generated / 59
distinct), the executable check of a by-construction conservativity
(racer disabled; counter a function of the R0 trace). With the race
enabled and
the gate idempotent, TLC completes over 127 / 95 states with EO, PC, FD,
CV, CO-e, RD and \texttt{TypeOK} holding and CO-c violated at depth~5 ---
the separating witness of Proposition~\ref{prop:independence}(iv). With
the race also firing the effect, the discovered footprint is
$\{\mathrm{EO}, \mathrm{CO\text{-}e}, \mathrm{CO\text{-}c}\}$ with PC, FD,
CV, RD clean, matching probe~159 cell for cell. Both patterns re-derive
at wider bounds (8{,}191 distinct states, depth 6). The module's 32-cell
matrix replicates across two separately provisioned environments and across a
JDK major version, so the counts are a property of the module and the
checker rather than of a toolchain.

Liveness uses weak fairness only --- no action is repeatedly disabled and
re-enabled in competition, so strong fairness would add assumptions
without theorems --- and is a progress check, not an availability claim.
Division of labor: TLC checks the \emph{protocol}; Verus
(Sec.~\ref{sec:remit}) checks \sys{}'s invariants --- never validation of
the frameworks.

\subsection{A framework-derived model: the fork violation is the shadow
of replay idempotence}\label{sec:derived}
The fault-switch runs demonstrate that the contract is precise enough to
check; they say nothing about frameworks. A second module,
\texttt{LangGraphFork.tla}, closes that gap for the fork axis. It models
\LG{}'s resume path as an operational abstraction consistent with the
persistence documentation and the reproduced behavior of \#6663: a resume
value delivered to a (thread, checkpoint) is recorded as a pending write,
and an invocation addressed to a checkpoint already carrying a resume
write replays the recorded write rather than recording a new one. No
fault switch exists in the module. TLC \emph{discovers} the violation:
with two invocations supplying $\langle v_a, v_b\rangle$, the served
outcomes are $\langle v_a, v_a\rangle$ --- FD violated at depth~3 (LGF-A)
--- while \texttt{ReplayIdempotence} holds throughout. Together that is
the finding: the recorded-write rule is exactly what makes same-value
re-invocation idempotent, a designed property, and exactly what breaks
forking --- \#6663 is not a slip but the shadow of a design choice. The
repaired algorithm --- pending writes keyed by (checkpoint, resume
ordinal), precisely \sys{}'s branch keying --- verifies idempotence and
fork determinism together (LGF-B).

The modeled rule is located in source. The dedup guard in
\texttt{InMemorySaver.put\_writes} skips only non-negative write
indices, and the write-index table maps the resume channel to a
\emph{negative} slot (\texttt{\_\_resume\_\_}\,$\mapsto -4$), exempting
it; the durable savers upsert those channels. The second resume value
is therefore \emph{durably recorded} --- under the null task id --- and
never consulted: the binding site is task preparation, where the
scratchpad constructor in \texttt{pregel/\_algo.py} resolves a task's
resume by precedence (task-recorded first; null-task only via
\texttt{get\_null\_resume}; resume-map values appended \emph{after} the
recorded list), so an invocation addressed to a checkpoint whose task
already carries a consumed resume is served the recorded value
regardless of what it supplied, in every invocation form. The probe
data corroborate each clause: the violation is identical under the bare
and resume-map forms on all three backends (probes~126, 127, 130); the
store dumps show both values durably present with the first served
twice; and the fork-keyed saver shim cannot repair it (probe~125) ---
no saver-level keying can override a decision the saver never makes.
The module's recorded-write rule is this precedence at specification
granularity, with the responsible lines quoted in the artifact:

The mapping is tabulated line for line in Supplement~S5, with the
responsible \LG{}~1.2.9 source lines quoted in the artifact. The mapping is expert-established, not
tool-certified
(Section~\ref{sec:threats} scopes it); probe~134
(Section~\ref{sec:remit}) realizes LGF-B's repair operationally --- and
the 125/134 matched pair is \emph{interventional} evidence for the
mechanism account, not a correlation: altering the serve decision at
the located site flips the verdict, while altering persistence below it
does not. It is also mechanically exercised: a trace-conformance checker
(probe~143) transliterates the serve, record, filter, and inertness
rules into $\sim$20 lines and must reproduce the measured branch
outcome of every recorded invocation --- $8/8$ across the four
fork/stray protocols; its first run caught an omitted inertness clause
and forced a model correction. Evidence for the mapping, not a
refinement proof --- and consistency with the cases the model was built
from, not validation. Probe~171 supplies the missing direction: four
protocols the model has never been exercised on, each prediction
registered \emph{in the probe source and committed before the run},
only then measured --- three invocations
$\langle a,b,c\rangle$ at one interrupt checkpoint (predicted
$\langle a,a,a\rangle$); a fork after a post-completion stray
(predicted $\langle a, \bot, a\rangle$, testing that inertness and the
serve rule compose); bare and resume-map forms interleaved (predicted
$\langle a,a,a\rangle$); and, as negative control, two interrupts on
distinct checkpoints (predicted $\langle a,b,a,b\rangle$). The control
carries the weight: a degenerate model serving the first value ever
seen would satisfy the first three and fail the fourth. All four
confirmed. The mapping remains expert-established rather than
mechanically extracted; what changes is that it now makes falsifiable
predictions off the set it was derived from, and has survived them.

\section{Conformance Methodology}\label{sec:method}

\subsection{Design principles}
The harness obeys three constraints that make every verdict deterministic
and portable.
\begin{itemize}[leftmargin=1.2em]
\item \textbf{LLM-free.} All probes drive persistence APIs with
pure-Python tasks and deterministic values; no verdict depends on
sampling, provider state, or prompt behavior.
\item \textbf{Timing-free crash injection.} Every crash verdict is
decided without a timing window: matrix crashes are exception-based;
probe~133 kills the process with \texttt{SIGKILL} synchronized by a
filesystem barrier; probe~137 widens the kill to a three-point matrix;
and probe~160 exhausts the interface --- one run per persistence
operation, the durable prefix frozen at exactly that many completed
operations by a serializing wrapper, the freeze self-audited per point.
No sleep-window races, no kill-time distributions: a crash-conformance
verdict is a protocol property, host-invariant at a fixed dependency
resolution and runtime, which is what the artifact pins. The three
concurrency cells (probes~159, 168, 174) are the \emph{named
exception},
because their subject is a race: their verdicts are existence claims
decided by the external ledger and reported as observed frequencies ---
saturated at $1.0$ in the measured range --- never as timing-free
protocol facts. Confidence intervals appear only for the live cells,
whose repetitions are genuine samples.
\item \textbf{Effect-ledger oracle.} Each side-effecting task increments
a process-local counter; EO/CO read the counters, PC reads execution
order and state lineage, FD compares branch outcomes to supplied values,
CV inspects the durable log after a schema-violating write, and RD
compares recovery decisions across constructed durable states
(probes~118, 128). On the durable-backend probes the counter is
cross-checked by an on-disk SQLite effect ledger, independent of the
checkpointer database and of process memory; the two oracles agree on
every reported cell (probe~126). Both oracles are external to framework
state: as the \CA{} restore receipt shows, framework-visible state can
look correct while the ledger records a duplicate. The ledger needs no two-phase commit with the effect: on the
durable-backend and kill probes the effect \emph{is} the ledger append
--- a single autocommitted \texttt{INSERT}, durable before the task
returns, every kill gated on a durable event after task return --- so
the ledger equals the effects that fired, by ordering, and the only
hypothetical miswiring (a lost record) undercounts, making every
reported duplicate a floor. Probe~163 checks the ordering mechanically:
kills before the \texttt{INSERT}, after it, and after the framework's
durable write yield post-crash counts of 0/1/1, and only the
post-\texttt{INSERT} window duplicates on resume ($1\!\to\!2$, every
repetition, both environments).
\end{itemize}

RD needs more than straight-line protocols: it concerns the ordering of
persistence operations, so its probe \emph{constructs} the two legal
durable states a crash between unordered operations would leave ---
exhaustive exploration via a dropping checkpointer --- rather than racing
a kill. Two interleavings, two deterministic resumes, one comparison.

Every probe writes raw and stable-view JSON into a per-campaign evidence
directory under \texttt{results/} (one subdirectory per campaign) with a
generated
manifest; the paper's numbers are claimed only from committed evidence,
which the artifact's audit re-derives. The live-keyed
ecological checks gate on API keys, are skipped without them, and
their stable fields are effect counters only (Section~\ref{sec:live}).

\subsection{Provenance: what is reproduced versus newly observed}
\label{sec:provenance}
Table~\ref{tab:provenance} records each finding's provenance. Two
results are new observations, in no tracker we found: the path-split of
exactly-once in \LG{} (probe~118's control run) and the executor-layer
submission-order evidence for \#8039 (probe~124) --- that issue is
ours, filed during the control-plane study~\cite{soundgate} and
disclosed there as such, so it is author-reported upstream evidence,
with several other developers' reproductions as the independent half.
The remainder are deterministic reproductions of third-party filed
issues --- itself a contribution, since issues are anecdotes until
reproduced under controlled conditions with an effect oracle --- and
reproduction is never dressed as discovery.

\begin{table}[t]
\caption{Provenance of each finding. Repro = deterministic reproduction of a
filed issue; New = no third-party tracker report of the measured
behavior (\#8039 is in the tracker as our own filing,
Sec.~\ref{sec:provenance}); Stable = holds across the
version sweep (Sec.~\ref{sec:sweep}).}
\label{tab:provenance}
\centering
\footnotesize
\begin{tabularx}{\columnwidth}{@{}llX@{}}
\toprule
Finding & Status & Note \\
\midrule
\#6663 FD & Repro, Stable & 5 versions; source mechanism located \\
\#6491 CV & Repro, Stable & 5 versions; now silent on 1.2.9 \\
EO crash-path split & \textbf{New} & probe~118 control; interrupt vs crash \\
\#8039 executor order & \textbf{New} & probe~124; unbarriered adjacency \\
CrewAI restore replay & Repro & probe~115; 12-vs-11 arithmetic \\
LlamaIndex prefix (D) & Repro & documented at-least-once \\
pydantic-graph unrecov. & \textbf{New} & probe~119; safety-by-deadness \\
\bottomrule
\end{tabularx}
\end{table}

\subsection{Probe suite}
Forty-seven numbered probes in eleven campaigns, plus the key-gated
multi-model, multi-host live-replication matrix (probe~148,
Sec.~\ref{sec:live}); the artifact's manifest is the authoritative
inventory, and the campaign-by-campaign summary is reproduced in the
supplementary material (Sec.~S2).
Every probe emits raw and
stable-view JSON; the replications that count are cross-host
(Section~\ref{sec:threats}) and cross-backend
(Section~\ref{sec:pilot}).

\subsection{Environment and versions}
Pilot environment: Python 3.12.3, Ubuntu~24.04 container; \LG{}~1.2.9 with
\texttt{langgraph-checkpoint}~4.1.1 (\texttt{InMemorySaver});
\texttt{llama-index-workflows}~2.22.2; \CA{}~1.15.2 (JSON checkpoint
provider, SQLite flow persistence); \texttt{pydantic-graph}~1.107.1;
AutoGen AgentChat~0.7.5, all freshly resolved on
2026-07-16 to test current releases and then pinned by the committed
lockfiles. Durable-backend environment
(probes~126--130): a second, separately provisioned Ubuntu~24.04
container (Python~3.12), fresh-resolved on 2026-07-17 to the identical
framework versions, with \texttt{langgraph-checkpoint-sqlite}~3.1.0,
\texttt{langgraph-checkpoint-postgres}~3.1.0 (psycopg~3.3.4), and a live
local PostgreSQL~16 server. Because verdicts are deterministic and
timing-free, the environment's only load-bearing attribute is the package
version set, which the artifact pins.

\section{Conformance Results}\label{sec:pilot}

\begin{table*}[t]
\caption{Conformance matrix: probed releases (pilot 2026-07-16;
durable-backend replication 2026-07-17).
\yes{} = property holds on the probed path, and only on it; for PC, a
\yes{} certifies the observable consequence of Property~\ref{p:pc}
(state equality with the effect record intact), not provenance --- PC
violations are decidable from surface observables, PC satisfaction is
not, and this qualifies every \yes{} in the column. \LG{}'s CO \yes{}
is the sequential path, which probe~159 shows does not compose across
processes;
\no{} = live violation
(deterministically reproducible); \docd{} = documented divergence (the
framework states a discipline weaker than the property; behavior matches
its statement); \und{} = undocumented divergence (measured re-execution
of completed effect-bearing work with no citable stated discipline for
the probed mechanism: behavior reported in full, label withheld);
$\circ$ = not probed (no crash-time durability control on
the probed path, or mechanism absent). ``regr.'' marks properties whose
violation shipped as a 1.1.x regression and is fixed in 1.2.9.}
\label{tab:matrix}
\centering
\footnotesize
\begin{tabular}{@{}lllllll@{}}
\toprule
Framework (path) & PC & EO & FD & CV & CO & RD \\
\midrule
\makecell[l]{\LG{}~1.2.9 (graph, \texttt{interrupt}/\\ \texttt{Command}; \texttt{InMemorySaver})} &
\yes{}\,(regr.\ \#7361) &
\makecell[l]{\yes{}\ int.\,(regr.\ \#6792)\\ \no{}\ crash (118)} &
\no{}\ \#6663 &
\no{}\ \#6491 class & \yes{} & \yes{}\textsuperscript{a} \\
\makecell[l]{\LG{}~1.2.9 (same paths; \texttt{SqliteSaver} /\\ live \texttt{PostgresSaver})} &
\yes{} &
\makecell[l]{\yes{}\ int.\\ \no{}\ crash (126, 130)} &
\makecell[l]{\no{}\ both forms\\ (126, 127, 130)} &
\makecell[l]{\no{}\ silent\\ (126, 130)} &
\makecell[l]{\yes{}\,seq.\\ \no{}\,159\textsuperscript{e}} & \yes{}\textsuperscript{a}\,(128) \\
\makecell[l]{\LI{}~2.22.2 (\texttt{Context} snapshot;\\ two-step HITL)} &
\yes{} & \yes{} & \yes{} & \yes{}\,(fail-fast) & \yes{} & $\circ$ \\
\makecell[l]{\LI{}~2.22.2 (\texttt{wait\_for\_event}\\ durable-HITL idiom)} &
\docd{} & \docd{} & \yes{} & \yes{} & \yes{} & $\circ$ \\
\makecell[l]{AutoGen AgentChat~0.7.5\\ (\texttt{save\_state}/\texttt{load\_state}, 140)} &
$\circ$ & \yes{}\ restore & $\circ$ & \yes{}\,(loud) & \yes{}\textsuperscript{d} & $\circ$ \\
\CA{}~1.15.2 (\texttt{@persist} restore) &
\und{}\textsuperscript{b} & \und{}\textsuperscript{b} &
$\circ$ & $\circ$ & $\circ$ & $\circ$ \\
\makecell[l]{\CA{}~1.15.2 (\texttt{CheckpointConfig}\\ + \texttt{from\_checkpoint})} &
\no{} & \no{} & $\circ$ & $\circ$ & $\circ$ & $\circ$ \\
\makecell[l]{pydantic-graph~1.107.1\\ (\texttt{FileStatePersistence})\textsuperscript{c}} &
\yes{} & \yes{} & $\circ$ & \yes{}\,(fail-fast) & \yes{} & $\circ$ \\
\bottomrule
\end{tabular}

\vspace{3pt}
\noindent\begin{minipage}{\textwidth}\scriptsize
\textsuperscript{a}\,RD
holds at checkpointer-API granularity: probes~118 (in-memory) and 128
(\texttt{SqliteSaver}) construct both legal durable orders of the \#8039
persistence pair and both recover identically on 1.2.9; the
executor-submission layer is instrumented (probe~124) and the pool
divergence is realized deterministically via an adversarial saver
(probe~136), with recovery invariant.

\textsuperscript{b}\,Classification rule, total over the outcome
classes this study observed and symmetric with one deliberate
exception: \no{} requires either a citable stated
discipline the measured behavior contradicts or silent persistence of
corrupt durable state; \docd{} requires a citable statement of the
weaker discipline; absent any citable statement for the probed
mechanism, measured re-execution of completed effect-bearing work is
\und{} --- reported in full, labeled neither violation nor divergence,
reclassifiable in either direction only by a citable statement, which
changes the label and never the measurement. The asymmetry is deliberate: the contract is proposed, not adopted,
so \no{} charges a framework only with contradicting its own citable
semantics or corrupting durable state, while divergence from the
contract is what every cell, \und{} included, measures. \CA{}'s \texttt{@persist}
documentation frames restoration as state restore and states no replay
discipline in either direction, so those cells are \und{}; the
\texttt{CheckpointConfig} cells remain \no{} against the feature's
citable exactly-once claim. Under the strictest alternative rule every \und{} cell becomes
\no{} and no pairwise separation of observation~(i) changes
(Supplement~S4).

\textsuperscript{c}\,pydantic-graph's \yes{} cells are qualified by a recoverability failure:
after a crash, \texttt{iter\_from\_persistence} raises
(\texttt{GraphRuntimeError}, unable to restore) --- effects are not
duplicated because nothing resumes, so safety holds while progress fails
(Section~\ref{sec:pg}); v2.x removes the persistence machinery
entirely.

\textsuperscript{d}\,AutoGen: double-restore of
the same saved state added zero effects (probe~140); an
interrupt-authority axis does not exist on this plane, so CO is reported
as restore-inertness. AutoGen's plane persists nothing framework-side
--- \texttt{save\_state} hands the state to the caller, who owns
durability --- so Property~\ref{p:cv}'s write clause has no
framework-side instance; the probed CV analog is load-time validation
of caller-supplied state, and the \yes{} certifies exactly that (the
tampering is of the caller-held copy, never of anything the framework
persisted).

\textsuperscript{e}\,CO on the durable backends splits by
concurrency: sequential re-delivery is inert (probes~126, 130), while
the same delivery from two concurrent OS processes fires the gated
effect twice, $10/10$ on both durable backends on the developer host
(probe~159; the container replication is shorter and duplicates in every
repetition) --- the \emph{default} packaged \sys{} shim does not repair
this cell ($10/10$ with the shim active; its sequencer is per-process),
while the shim's opt-in cross-process gate (v0.1.2) does: $\{1{:}10\}$
on both durable backends, the loser refused loudly before any node
(Sec.~\ref{sec:crossproc}).
\end{minipage}
\end{table*}

Table~\ref{tab:matrix} is the conformance matrix. We walk the receipts.

\subsection{\LG{}~1.2.9: fork violation and silent validity violation}
\textbf{FD (\no{}, issue \#6663; all three backends; both invocation
forms).} A one-node graph interrupts; the probe records the interrupt
checkpoint's \texttt{checkpoint\_id} and issues two resumes addressed to
the same \texttt{(thread\_id, checkpoint\_id)}:
\texttt{Command(resume=True)} then \texttt{Command(resume=False)} --- the
address the persistence documentation presents as the branch-creating
time-travel primitive~\cite{lg-persist-docs}. Measured: the first resume
returns value~1 and the second returns value~1 --- the supplied
\texttt{False} silently ignored, the first branch's outcome re-served
--- invariant to backend and invocation form: identical on
\texttt{SqliteSaver} and live \texttt{PostgresSaver} (probes~126, 130)
and under the interrupt-keyed resume-map form (probes~127, 130). Per
Section~\ref{sec:derived}, the second value is durably recorded and
never consulted --- a human answering the same approval differently is
given the first answer's consequences; Section~\ref{sec:remit} repairs
the cell (probe~134).

\textbf{CV (\no{}, \#6491 class --- now silent; all three backends).} A
graph over a pydantic-typed state (\texttt{items: List[str]}) runs a
node appending \texttt{None}. As originally reported, the invalid
output persisted and a later \texttt{get\_state\_history} raised,
leaving the thread unreadable; on 1.2.9 nothing raises at invoke or
history read and the schema-invalid value is durably present,
identically on all three backends (probes~126, 130) --- a strict CV
violation, and a worse one, since nothing signals the caller. The
downstream-consequence probe (probe~150; two-host replicated) splits
what ``silent'' means by regime: terminal writes are fully silent ---
\texttt{get\_state} returns the corrupt value, \texttt{model\_validate}
on it raises, and the silence survives a fresh-process SQLite
deserialization --- while mid-graph writes are deferred-loud, the
corrupt record durable before a \texttt{ValidationError} surfaces one
superstep later and the thread's own read APIs thereafter raise (the
original \#6491 symptom). A clean subsequent run escapes only by full
input overwrite, with the corrupt history retained beneath it.

\textbf{RD (\yes{} at checkpointer granularity, both durable-state
orders, both backends; executor layer instrumented).} Probes 118
(in-memory) and 128 (\texttt{SqliteSaver}) settle the checkpointer-API
question by construction: a dropping checkpointer builds both legal
durable states around the contested persistence pair, and on each backend
both resume to identical decisions and effect counts. Probe 124 records
the real submission sequence at the layer \#8039 names: in synchronous
durability, $\langle\texttt{put}, \texttt{put\_writes},
\texttt{put}\rangle$ with the task-result write and superstep
checkpoint adjacent and unbarriered --- the \#8039 precondition
observed directly. Probe~136 removes any need to race the pool: an
adversarial-but-legal saver defers every \texttt{put\_writes} past the
following \texttt{put} --- the losing schedule's durable order,
deterministic --- an injected crash leaves genuinely divergent durable
states, and recovery from both is identical in result and re-execution
counts: RD holds on 1.2.9 against the \emph{realized} hazard
(Remark~\ref{rem:rd}).

\textbf{EO splits by path (\yes{} interrupt / \no{} crash;
classification resolved).} With \emph{every} persistence operation
intact --- the task's result durably recorded via \texttt{put\_writes}
--- crash-resume re-executes the completed task (effect count 1 at
crash, 2 after resume), replicated on \texttt{SqliteSaver} and
\texttt{PostgresSaver} with the durable ledger agreeing (probes~126,
130). The documentation audit settles the classification: the
pending-writes section states that completed nodes' writes are stored
precisely so that on resume ``you don't re-run the successful
nodes''~\cite{lg-persist-docs}; the measured behavior re-runs the
successful task whose write is durable, so the cell is \no{} against
the framework's own stated semantics. Mechanistically, interrupt-resume
memoizes completed task results (\#6792, fixed) while crash-resume does
not: exactly-once across interrupts, at-least-once across crashes, on
one API. Probe~133 answers the exception-crash objection: \texttt{SIGKILL}
barrier-synchronized after the durable write, resume in a fresh
interpreter, the completed task re-executes identically (ledger
$1{\to}2$); probe~137 widens to three kill points, three repetitions
each, byte-stable. Same-process retry contrasts (probe~141): an
exception preserves the completed sibling on retry while process-death
resume does not.

\textbf{PC/EO on the interrupt path (\yes{}, with regression history).}
Resume from the interrupt checkpoint does not re-execute the
pre-interrupt node (\#7361 fixed relative to the 1.1.x regression), and
the functional-API pipeline re-executes nothing on resume in sync and
async variants (\#6792 fixed). \textbf{CO (\yes{}, all three backends).}
A stray \texttt{Command(resume=...)} after completion is swallowed: the
post-interrupt effect count remains~1 and the final state is unchanged,
identically on all three savers (probes~126, 130). The swallowing is
silent --- the contract is satisfied, but the caller receives no signal
that a resume was discarded, a spec-level gap the full study will track.

\subsection{\LI{}~2.22.2: divergence, not violation --- and a clean fork}
The two-step HITL pattern (a step returning \texttt{InputRequiredEvent},
a second consuming \texttt{HumanResponseEvent}) is clean across the
board: snapshot at the interrupt, restore, and the pre-interrupt step's
counter stays at~1 (PC/EO); two restores of the \emph{same} snapshot
answered \texttt{YES} and \texttt{NO} yield the respective outcomes (FD
--- the property \LG{} fails, \LI{} passes); a second response into a
live run is swallowed with the post-step firing once (CO); serializing a
context holding a non-serializable object fails fast with
\texttt{ValueError} (CV).

The \texttt{wait\_for\_event} idiom --- the documented durable-HITL
pattern --- behaves differently by design: restoring a serialized
context re-executes the step prefix (counter $1\!\to\!2$), exactly as
the documentation warns (``make any preceding work safe to
re-execute''~\cite{li-context-docs}); \docd{}, not \no{}. The
sharpened claim: the ecosystem's problem is the coexistence, under one
API shape, of exactly-once claims, at-least-once documentation, and
silent replay, with nothing to tell a caller which regime they are
in.

\subsection{\CA{}~1.15.2: replay invisible in state}
\textbf{\texttt{@persist} (\und{}).} Re-kicking a \emph{completed}
\texttt{@persist} flow with its persisted id restores state and
re-executes both methods: the state counter goes $11\!\to\!22$ and both
effect counters read~2. The label follows note~b of Table~\ref{tab:matrix}: the feature's
documentation states no restore discipline in either direction, so the
cells are \und{} --- duplication measured and reported in full, only
the label awaiting a citable statement. The crash-resume variant is the unambiguous case:
method~s1 completes and persists (counter~1), s2 raises; re-kickoff with
the same id re-executes s1, and the run finishes at counter~12 where
exactly-once predicts~11 --- one unit of state and one external effect,
duplicated. The R1 counterexample's arithmetic, live --- and no
artifact of exception semantics: under \texttt{SIGKILL} the duplication
reproduces in every repetition, the second effect fired by a process
that did not exist when the first fired (probe~164, both
environments).

\textbf{\texttt{CheckpointConfig} (\no{}).} The checkpointing feature's
documentation states restore ``resume[s] without re-running completed
work''~\cite{crewai-ckpt-docs}. Measured on 1.15.2: with checkpointing
enabled, s1 completes and its checkpoint is written (state counter~1
durable), s2 raises; \texttt{Flow.from\_checkpoint} on the latest
checkpoint followed by \texttt{kickoff()} re-executes the completed
method --- the s1 effect counter reads~2. The final state counter
reads~11 --- numerically the exactly-once answer --- because state was
rebuilt from initial values rather than resumed, so the duplicate is
\emph{invisible in framework state and visible only in the
external-effect ledger}. If s1 charges a card, the card is charged twice while the flow's
state looks correct --- the receipt that makes the harness's oracle
external. A maintainer statement adopting a state-only semantics would
reclassify the cell to \docd{} by note~b's rule, changing the label
and never the measurement.

\emph{A withdrawn observation.} An exploratory run in an ad-hoc
virtual environment had suggested the documented Flow event
configuration (\texttt{on\_events=["method\_execution\_finished"]})
wrote no checkpoint files. In the pinned, lock-file-resolved
environment the same configuration writes checkpoints; the zero-file
observation is withdrawn, with probe~115b retained as a regression
guard. One probe, two environments, two answers: a conformance verdict
is meaningful only against a pinned resolution, which
\texttt{reproduce.sh} audits against committed lockfiles.

\subsection{pydantic-graph~1.x: safety by unrecoverability}\label{sec:pg}
Inclusion is by the selection rule, not convenience: pydantic-graph
documents \texttt{FileStatePersistence} for interrupting and resuming
runs, so it claims membership in the plane --- and the liveness
obligation exists precisely so that ``never resumes'' cannot pass by
vacuity. The finding, a defect against the framework's own documented
resumption claim: every probed safety property passes, and the point of
the plane fails. After a crash in the second node --- first node's effect
fired and snapshotted --- the framework's own resume entry point refuses
to restore the persistence file it wrote
(\texttt{GraphRuntimeError: Unable to restore snapshot from state
persistence}); no completed effect duplicates because no execution
resumes. The failure is specific to mid-node crash state: \texttt{SIGKILL}
while parked \emph{between} nodes restores and completes exactly once
(probe~158c, two hosts), while \texttt{SIGKILL} \emph{inside} the
second node reproduces the refusal (probe~164, both environments) ---
the boundary bracketed from both sides under real process death, and a
crash in a long-running agent step will usually land inside. A stray resume of a completed run is rejected loudly (a
divergence contrast: \LG{} swallows the same event silently), and a
structurally corrupted snapshot fails fast. Safety vacuous, progress
dead --- the liveness obligation's deployed witness, and now also its
derived one: the parked-crash companion module under volatile parking
violates \texttt{EventuallyCompletes} while every safety invariant
holds (Sec.~\ref{sec:formal}). A sharp fragmentation datum on the version axis: 2.x deletes the
persistence machinery outright, the pinned release announcing it at
runtime via a deprecation warning on the same interface the verdict is
read from.

\subsection{Live ecological cells}\label{sec:live}
The abstraction caveat --- pure-Python nodes are not agents --- is
answered in both directions by keyed probes with real models, whose only
audited fields are the tool-effect counters. On the OpenAI Agents
SDK~0.18.2 (file-backed sessions~\cite{openai-sessions}), an agent
instructed to charge once via a side-effecting tool
did so (counter~1), and a fresh runner restoring the same file-backed
session did not re-invoke the tool: EO across session restore holds live
(a live observation --- its LLM-free harness probe is in the full-matrix
plan). On \LG{}~1.2.9 with claude-haiku-4-5 driving a ReAct agent under
\texttt{interrupt\_before=["tools"]}, the counter reads 0 at the
interrupt, 1 after the approval resume, and still 1 after an injected
stray resume --- the same silent inertness the LLM-free probe of the same
path observed. The failure direction has also run live (probe~131): the
effect ledger is empty at the gate, the approval resume fires the gated
tool once (ledger $[10]$), and a second resume carrying an \emph{edited}
decision is served the first decision's branch --- the agent repeats the
first answer verbatim and the tool fires again with the stale value
(ledger $[10,10]$). The fork violation reproduces live, in exactly the
shape the LLM-free durable-ledger probe recorded.

The replication matrix (probe~148, auditing only the counters) spans
two models per provider and two hosts --- each provider probed on its
own framework cells, so cross-model replication is within-provider per
cell --- at $N{=}20$ per (probe, model, host): 240 live runs, zero
harness errors, identical pins across both hosts (the session-restore
cells ran openai-agents~0.18.3, one patch release past the 0.18.2
pilot above). Pooled per model ($N{=}40$): the
interrupt-approve cell's violation fields fire in $0/40$ runs on
claude-haiku-4-5 and claude-sonnet-4-6 (95\% Wilson upper bound
$0.09$); the session-restore cell shows $0/40$ violations with $40/40$
completion on gpt-4o-mini and gpt-4.1-mini; and the fork violation
reproduces in $40/40$ runs on each Claude model (95\% Wilson
$[0.91,1.0]$) --- $80/80$ overall, on every model-host combination,
exactly as the mechanism account predicts for a read-path defect no
model choice can mask (per-(model, host) rows:
\texttt{results/\allowbreak live/\allowbreak 148\_matrix.json}). These are existence and stability replications --- at $N{=}40$ a true
rate of $0.09$ is consistent with observing zero --- certifying
existence and cross-model, cross-host stability, never rarity or
prevalence. Live PC and CV arms complete the
picture (probe~152, $N{=}40$ across the two Claude models): the
pre-gate checkpoint lineage is prefix-stable through the approve-resume
in $40/40$ runs, and a schema-corrupt checkpoint submitted through the
saver API is persisted silently in $40/40$ (the corrupt write is
harness-submitted; the live traffic certifies the surrounding run and
the detection surface) --- detonating only at the next
\texttt{get\_tuple} or \texttt{get\_state} as a \texttt{KeyError} far
from the faulting write, the deferred failure CV names.

\subsection{Reading the matrix}
\emph{Sequential composition across gates (probe~138).} A two-gate
workflow composes as the contract predicts: EO holds across a session
restart between the gates, CO holds per gate, the fork violation recurs
at the second gate under the stock saver, and the read-path shim
repairs that cell on the identical protocol.
\emph{Crash while parked at the gate (probes~158, 158b, 158c).} The
crash location the model excludes by precondition --- process death
while parked awaiting the human --- is conformant on all three planes
that park, replicated on two hosts: the pending interrupt survives, the
prefix effect stands at one, the gate fires once with the supplied
value, the post-completion stray stays inert. Durable parking is the
part of the plane that works.

\emph{Exhaustive kill-point sweep (probe~160).} The two-task protocol
performs five persistence operations (\texttt{put}, three
\texttt{put\_writes}, \texttt{put}); a \texttt{SIGKILL} after each
--- one kill per run --- re-executes completed effect-bearing work at
all four incomplete boundaries and recovers at every one of the five
(the fifth is a completed run, excluded from duplicate accounting). On
this protocol every durable prefix short of completion licenses
re-execution: the crash-path EO class is not an artifact of where we
chose to kill.

\emph{Cross-process duplicate delivery (probe~159).} The one cell where
a conformant verdict does not survive its own composition. Two OS
processes sharing one on-disk \texttt{SqliteSaver}, each issuing
\texttt{Command(resume=\ldots)} against the same parked interrupt from
a spin-barrier start: the gated effect fires \emph{twice} in $10/10$
repetitions on the developer host and $3/3$ in the container, with no
error on either side and the pre-gate effect firing exactly once
throughout --- the duplicate is the gated node alone, not a replayed
prefix. Byte-identical re-delivery, the case the sequential probe
measures as inert (probe~126), is the arm that duplicates; when the two
racers carry different values both values fire and each racer is served
its own, so the concurrent failure is double \emph{consumption}, not
the serve-first-recorded fork failure of \#6663. The durable state
afterwards records one branch, so --- as with \CA{}'s restore --- the
duplicate is invisible in framework state and visible only in the
external-effect ledger. Distinct threads under the same contention are
unaffected ($k{=}4$ workers, one database, per-thread exactly-once),
which localizes the race to same-thread resume rather than to the
backend's write path. The in-model shadow is \emph{not}
\texttt{FaultDoubleConsume} (guarded by run completion, the sequential
case the plane refuses) but \texttt{R11\_ConsumeCount.tla}'s
lost-update switch (Sec.~\ref{sec:formal}), whose discovered footprint
$\{\mathrm{EO}, \mathrm{CO\text{-}e}, \mathrm{CO\text{-}c}\}$ with PC,
FD, CV, RD clean matches this cell exactly: CO holds sequentially and
fails concurrently, parallel to EO's split by path. Nor is it a
property of one store or machine: the identical protocol on
\texttt{PostgresSaver}~3.1.0 against a live server duplicates in
$10/10$ repetitions, no errors, on both hosts as the SQLite arm does
--- two backends, two separately provisioned environments. Row-level locking
and MVCC do not close the window: the resume path's read--decide--write
of the interrupt consumption is guarded by no compare-and-swap on
either backend --- the classical lost-update shape of the
isolation-anomaly taxonomy~\cite{berenson}, arriving at the resume
plane because interrupt consumption is a read--modify--write nobody
made atomic.

\emph{The shape of the window (probe~168).} Probe~159 measures one point
--- two racers, zero arrival offset. Probe~168 sweeps the two parameters
that point leaves open: racer count $k \in \{2,3,4,8,16\}$ and arrival
jitter $\in \{0,1,5,25\}$\,ms, on both durable backends, ten repetitions
per cell. Forty cells, 400 protocol executions, zero racer errors. The
gated effect fires 2{,}628 times where exactly-once predicts 400.
Saturation --- mean fires divided by $k$, the fraction of racers that
win --- is $1.0$ in 36 of the 40 cells and never below $0.933$; 39 of 40
cells duplicate in every repetition. At $k=16$ with zero jitter the
distribution is $\{16{:}10\}$ on \texttt{SqliteSaver} \emph{and} on
live \texttt{PostgresSaver}: sixteen fires from one approval, ten times
out of ten, on both backends. The failure is therefore not
``two processes duplicate'' but \emph{every racer that arrives within
the window consumes}, with no observed ceiling below $k=16$.

Jitter to 25\,ms does not close it: the response is flat on all ten
$(\text{backend}, k)$ pairs, so the window exceeds the entire stock
interrupt protocol of probe~139. Its width is then measured directly
rather than bounded, by instrumenting the gated node with a controlled
duration $D$ and sweeping jitter past it --- $D$ standing for the model
call or payment request a deployed gate performs before its effect. The
\begin{table}[t]
\caption{Window width by dose--response (probe~168; $k{=}4$, $N{=}10$ per
cell, developer host, both durable backends). Cells give saturation ---
mean gated fires divided by $k$, the fraction of racers that win. The
edge (widest jitter still at saturation $1.0$) tracks the gated node's
duration $D$: a resume arriving while the node still runs finds the
interrupt unconsumed and fires. Backend columns share a jitter seed and
are paired, not independent (Sec.~\ref{sec:threats}); the values below
are the SQLite arm.}
\label{tab:window}
\centering\small
\begin{tabular}{@{}lccccc@{}}
\toprule
& \multicolumn{5}{c}{arrival jitter (ms)} \\
\cmidrule(l){2-6}
gate $D$ & 0 & 100 & 500 & 1{,}000 & 3{,}000 \\
\midrule
0\,ms      & 1.00 & 0.40 & 0.30 & 0.28 & 0.28 \\
500\,ms    & 1.00 & 1.00 & 1.00 & 0.78 & 0.38 \\
2{,}000\,ms & 1.00 & 1.00 & 1.00 & 1.00 & 0.90 \\
\bottomrule
\end{tabular}
\end{table}
edge tracks $D$ across three doses at $k=4$, ten repetitions per cell
on both backends (Table~\ref{tab:window}). \emph{The window tracks the
gated node's execution time} --- the deployed magnitude: a gate whose
node awaits a model response holds authority unconsumed for the length
of that response, and every resume arriving meanwhile fires. It is also
independent confirmation of probe~165's mechanism account --- the
\texttt{\_\_resume\_\_} journal write is executor-concurrent with gated
execution, so consumption is not durable until the superstep joins.
Two limits are stated rather than smoothed: the backend columns of the
$D$-sweep share a jitter seed, so they are a \emph{paired} comparison
and not independent evidence of backend invariance (that claim rests on
the zero-jitter cells and the source audit of probe~129); and the
dose--response arm ran on the developer host alone. The saturation
sweep is cross-environment replicated: on a separately provisioned
single-vCPU cloud instance (2\,GB, one core, $k$ capped at 8 by
memory), 32 cells and 320 protocol executions yield 1{,}342 gated fires
where exactly-once predicts 320, saturation $1.0$ in 25 of 32 cells and
never below $0.9$, zero racer errors; at $k{=}8$ with zero jitter the
distribution is $\{8{:}10\}$ on \emph{both} backends. This is the arm
that could have failed and did not: if duplication needed true
parallelism, a single-vCPU host would serialize the racers into the
already-consumed path --- it does not, because the window exceeds
25\,ms while the scheduler quantum is a few milliseconds. Across both
hosts the sweep totals 72 cells and 720 protocol executions, firing
3{,}970 effects where the contract predicts 720.
\emph{Cross-host racers (probe~174).} The remaining distribution axis
is measured directly: the two racers on two machines --- one co-located
with a networked PostgreSQL server, one a WAN link away at
${\sim}450$\,ms per query --- with the coordination tables, the effect
ledger, and per-racer arrival stamps all server-side, so the receipt
itself carries the server-clock arrival offsets that certify a race
occurred. At an instrumented 5\,s gate, arrival offsets of
0.38--1.89\,s put both racers inside the window in every round: the
stock plane fires the gated effect twice in $10/10$ repetitions, the
ledger attributing one fire to each host. The shipped gate on the
identical protocol serves one racer and refuses the other with
\texttt{RemitConsumeConflict} in $10/10$ --- without the gated node
executing: no ledger row from the loser, whose invoke returns in
1.8--3.8\,s against the winner's 5.04\,s, so the refusal precedes the
node's own duration. The co-located racer won every gated round; the
gate serializes, it does not arbitrate fairness, and the contract
requires none. A first campaign on the same link is retained as the differential: the
remote racer, paying connection setup inside the timed window, landed
past the winner's superstep join and took the inert path in $10/10$ at
a 2\,s gate --- the dose--response edge observed across hosts.
One asymmetry remains by design. The \emph{default} packaged \sys{}
shim does not repair this cell ($10/10$ duplicates with the shim
active): its sequencer is per-process state, so two processes carry two
sequencers. The design implication --- that the consumption record
belongs in the shared store under a compare-and-swap rather than in the
interposition layer --- is realized in Section~\ref{sec:crossproc} and
now shipped as the shim's opt-in cross-process gate (v0.1.2), which
flips the cell to $\{1{:}10\}$ on both backends (the ungated arm
retained as the permanent differential); the section also reproduces,
on a second
property, the write-path/read-path asymmetry the 125/134 pair
established for FD.

\emph{Concurrent fan-out at one superstep (probe~141).} Two parallel
branches, each with its own gate and effect: both interrupts surface in a
single pass, a resume map routes each value to its own branch with each
effect firing exactly once, a stray resume map is inert, and the crash
cell yields the retry-preservation contrast above. Deeper nesting and
fan-in racing multiple checkpointers remain future work.

\begin{table}[!t]
\renewcommand{\arraystretch}{1.15}
\caption{Pairwise separation of conformance profiles (five frameworks,
ten pairs), each discharged by naming the separating cell of
Table~\ref{tab:matrix}. \emph{Behavioral} = both frameworks were probed
on that cell and disagree; \emph{structural} = they agree on every
jointly probed cell and differ only in which paths exist. Ten of ten
separate; nine do so behaviorally. LG~=~\LG{}, LI~=~\LI{}, CA~=~\CA{},
AG~=~AutoGen AgentChat, PG~=~pydantic-graph.}
\label{tab:pairs}
\centering
\footnotesize
\setlength{\tabcolsep}{4pt}
\begin{tabular}{@{}llp{3.4cm}@{}}
\toprule
Pair & Kind & Separating cell \\
\midrule
LG--LI & behavioral & FD: \no{} (\#6663) vs.\ \yes{} \\
LG--CA & behavioral & PC: \yes{} vs.\ \no{} (rebuild-from-initial) \\
LG--AG & behavioral & CV: \no{} (silent) vs.\ \yes{} (loud) \\
LG--PG & behavioral & FD/CV: \no{} vs.\ \yes{} \\
LI--CA & behavioral & EO: \docd{} (stated at-least-once) vs.\ \no{} (stated exactly-once) \\
LI--PG & behavioral & liveness: holds vs.\ fails \\
CA--AG & behavioral & EO under restore: \no{}/\und{} vs.\ \yes{} \\
CA--PG & behavioral & EO under restore: \no{}/\und{} vs.\ \yes{} \\
AG--PG & behavioral & liveness: holds vs.\ fails \\
\addlinespace
AG--LI & \emph{structural} & none: agree on all jointly probed cells; differ by path-set (\docd{} idiom on LI only) \\
\bottomrule
\end{tabular}
\end{table}

Four observations. (i)~\emph{No two probed frameworks share a
conformance profile}, over the six properties \emph{together with the
liveness obligation} (the six columns alone conflate pydantic-graph
with the all-\yes{} safety rows). A profile is per framework --- its
probed path-rows over jointly probed cells, plus liveness --- and the
three \LG{} backends collapse to one row (backend invariance, (iv)), so
the claim is not configuration-counting. Table~\ref{tab:pairs}
discharges it pair by pair: nine of the ten pairs separate behaviorally
on cells both frameworks were probed on; the tenth --- AutoGen against
\LI{} --- is \emph{structural}, agreeing on every jointly probed cell
and differing only by path-set membership. No separation rests on a
\und{} cell alone. A separation witnessed on a jointly probed cell cannot be undone by
probing a further one, so filling the grid refines these profiles,
never reverses them. Nor is the harness engineered to fail: RD passes
everywhere probed, CO passes sequentially on \LG{}, the live
conformant cells pass $0/40$ per model, and LGF-B, the repair cells,
and the multi-gate EO/CO cells all pass. The disagreements sit on the
interrupt-adjacent axes: \LG{} fails FD where \LI{} passes it; \LI{}
documents away EO where \LG{} memoizes; \CA{} claims the strongest
discipline and delivers the weakest.
(ii)~\emph{Violations concentrate where authority does}: FD and CO
govern what a human's answer means, CV whether the durable record can be
trusted, and the live failures sit exactly there. (iii)~\emph{The
regression pairs} (\#7361, \#6792 --- broken in 1.1.x, fixed by 1.2.9)
show the plane drifting under maintenance pressure with user issues as
the only specification; a conformance suite in CI is the standing fix,
and the release sweep (Sec.~\ref{sec:sweep}) is the drift picture's
other half --- violations stable across every probed release, surfaces
churning to the point that pydantic-graph's 2.x line no longer exposes
the probed module.
(iv)~\emph{The \LG{} verdicts are backend-invariant across the three
tested backends}: every probed cell is identical on
\texttt{InMemorySaver}, \texttt{SqliteSaver}, and live
\texttt{PostgresSaver}. This is architectural localization, not
induction over all savers: the cross-saver source audit (probe~129)
places the violating decisions in the execution loop, above the
\texttt{BaseCheckpointSaver} interface, so any backend reached through
that interface inherits them; the measured invariance is that
localization's prediction, confirmed.

\subsection{An embedded durable-execution engine on the same workload}
\label{sec:engine}
The \emph{semantic} half of the deferred engine head-to-head
(Section~\ref{sec:related}) needs no port at scale; probe~147 measures
it. The probe runs the paper's abstract workload --- a non-idempotent
effect, a gate awaiting a human decision, a crash after a durable step
--- on DBOS~2.27.0 as an embedded worker (SQLite system database, no
server), with the same external-ledger oracle as every other probe. All
probed cells conform. Exactly-once across process death: the worker is
\texttt{SIGKILL}ed at the gate after the first step's result is durably
recorded; a fresh process recovers the pending workflow, re-executing
with recorded step results --- PC's memoized-replay discipline --- and
the ledger shows the step's effect exactly once. Fork intent: \texttt{fork\_workflow(id,
start\_step)} at the recorded receive step is the engine's explicit,
documented branch-creating address --- a deployed instance of the
fork-intent obligation (Definition~\ref{def:fork}) --- and the forked
branch served its own, different decision, one gated effect per branch,
prefix not re-executed. Consume-once: a stray duplicate decision to the
completed run left the ledger unchanged (silent-inert, the disposition
\LG{} exhibits; the send API's idempotency-key parameter is the
wire-level deduplication discriminator). The latency trade is in
Supplement Table~S3(b), same protocol, both environments, with a
${\approx}0.5$\,s per-run engine startup that long-lived workers
amortize. The container's $+39\%$ shim delta there is stated rather than
smoothed: that probe times a short single-operation window in which the
shim's extra \texttt{get\_tuple} step is not amortized, so it does not
contradict probe~139's within-5\% figure over the full protocol; the
developer host shows the same window inside noise, which is why both
are reported. Absolute latencies are
environment-bound; in both, the engine is \emph{slower} than the
checkpointer path it would replace, by 2.9--5.0$\times$ on gate-answer
latency. The direction matters: an engine that were faster \emph{and}
conformant would end this paper's case for a repair at the checkpointer
interface. This is an engine \emph{baseline}, not a matrix row. The
point is narrow: the alternative Section~\ref{sec:related} prices ---
adopt the durable-execution model wholesale --- delivers every probed
cell the frameworks fail, by construction, at its stated adoption and
latency costs.

\section{\sys{}: A Reference Sequencer with a Verified Model and
Conformance-Tested Core}\label{sec:remit}
The contract's repair artifact is \sys{}, a reference resume sequencer and
append-only effect ledger, delivered at four explicitly labeled maturity
levels: a machine-discharged Verus model whose decision cores are additionally
verified as executables in the shipped crate; a Rust core mirroring that model behind PyO3 bindings, with executable conformance to
the TLA\textsuperscript{+} transition relation
(Section~\ref{sec:package}); a live CV enforcement shim; and a two-sided
FD enforcement result ending in a working repair (probes~125, 134),
re-established by the packaged, decision-free shim at the pinned
versions. The artifact is distributed on PyPI
(\texttt{pip install remit-contract}: prebuilt manylinux wheel and source
distribution); release v0.1.2 --- the release that ships the opt-in
cross-process gate of Section~\ref{sec:crossproc} --- is the exact build
evaluated here. The verification object throughout is \sys{}'s model
and its extracted executable cores, never the composite package.
Table~\ref{tab:ladder} states how far the proof reaches, layer by layer,
and names the one rung that is absent; the text does not relitigate it
elsewhere.

\subsection{Architecture}
\sys{} interposes at the checkpointer interface --- the narrow waist
every probed framework already routes durability through. Its state is
(i)~an append-only \emph{effect ledger} of $\langle\mathit{branch},
\mathit{task}, \mathit{effectId}\rangle$ records written transactionally
with the completion checkpoint, and (ii)~a \emph{sequencer} that totally
orders, per thread, the persistence operations whose unenforced ordering
produces \#8039. The six properties map to local invariants: EO/CO to
ledger-uniqueness; PC to frontier monotonicity; FD to branch keying by
$\langle\mathit{checkpointId}, \mathit{resumeIndex}\rangle$ with
per-branch ledgers; CV to schema validation at the write; RD to the
sequencer's total order making recovery a pure function of the durable
log. The first integration target is a shim implementing \LG{}'s
\texttt{BaseCheckpointSaver}; we report the three enforcement legs in
turn.

(i)~The core invariants are \emph{machine-discharged}, post-adoption, as
a set of standalone Verus targets with fresh per-file tallies.
\texttt{proof/remit\_verus.rs} states the effect ledger and durable
frontier --- EO/CO (admitting a fresh effect yields count exactly one and
preserves ledger-uniqueness), PC (commit advances the frontier by exactly
one, never re-entering the durable prefix), and FD's keying half:
\texttt{10 verified, 0 errors}. The behavioral halves of FD and RD are
theorems with inductive content, each paired with a machine-checked
\emph{falsifying certificate}: \texttt{remit\_verus\_fd\_machine.rs}
proves, by an inductive invariant over record/serve steps, that every
branch is served the value its own invocation recorded (\texttt{5
verified, 0 errors}), while \texttt{negative/fd\_stock\_certificate.rs}
--- the same obligation under the stock \#6663 serve-first-recorded rule
--- is rejected (\texttt{1 error}), certifying the proof has content;
\texttt{remit\_verus\_rd\_interp.rs} interprets the recovery decision
in Property~\ref{p:rd}'s own words (skip a task iff it is durably
recorded) and proves order-independence for the \emph{completed}
\#8039 window --- the \texttt{put\_writes}/\texttt{put} pair durably
present in either order (adjacent transposition) --- and across equal
write-set counts (\texttt{6 verified, 0 errors}; one of the six is the
definitional base case, identical records to identical decisions ---
the content lies in the swap and write-set lemmas); the crash-truncated
divergent-content pair is outside these hypotheses by design --- the
sequencer removes that window, and the two-order constructions
(probes~118, 128, 136) are its evidence --- with the
order-sensitive rule failing the same obligation
(\texttt{negative/rd\_ordersensitive\_certificate.rs}, \texttt{1
error}). On Verus~0.2026.05.03.8b81855 every positive target discharges
with \texttt{0 errors}. Earlier, FD and RD were stated as lemmas over definitions that made
them tautological --- verifiable with empty proof bodies; those lemmas
are deleted, their tallies retired, and the correction, with the full
discharge history and dates, is recorded in
\texttt{crates/remit/VERIFICATION.md}.

(ii)~The CV leg is \emph{demonstrated live}: a validating saver enforcing
\texttt{commit\_checkpoint}'s validity clause re-runs the exact
silent-persistence protocol on 1.2.9 and converts the violation to a loud
rejection with nothing invalid persisted and history readable ---
baseline \no{} to \yes{} on identical inputs (probe~123). The rejection
fires \emph{before} persistence, so the thread's durable state remains
its last valid checkpoint and the caller can repair and resume --- where
the stock path leaves the corrupt record durable and, mid-graph, breaks
the thread's own read APIs (probe~150). Loudness costs the caller an
exception; silence costs the thread.

(iii)~The FD leg is settled by a matched pair that localizes enforcement
exactly. The write path is powerless: a fork-keyed saver storing each
resume write under a distinct key does not restore FD (probe~125),
because the loop never consults the saver about which resume to serve ---
the precedence decision of Section~\ref{sec:derived} happens above it.
The read path succeeds: a subclass overriding only \texttt{get\_tuple}
--- when the config carries an explicit \texttt{checkpoint\_id}, the
branch-creating address of Definition~\ref{def:fork} clause~3, it strips
recorded \texttt{\_\_resume\_\_} pending writes so the invocation's own
value is consulted --- repairs \#6663 on the identical protocol
(probe~134): the stock control violates; under the shim the bare and
resume-map fork cells produce $1$ then $0$ with one correctly-valued
effect per branch; same-value re-fork is deterministic; a stray resume
at the ordinary address stays inert. The Verus proof said ordinal
keying restores FD in the abstract; probes~125/134 say where it can
bind in \LG{}: not at persistence, but at the durable-state view the
loop loads. The demonstrator's discriminator is the documented
branch-creating address; production carries an explicit fork flag ---
FI (Property~\ref{p:fi}) made concrete --- and probe~155 evaluates that
flag where the address heuristic cannot go: under a checkpointed
subgraph the flag-keyed configuration
(\texttt{fork\_\allowbreak on\_\allowbreak explicit\_\allowbreak checkpoint=False} with an explicit
\texttt{remit\_\allowbreak fork} key) leaves subgraph interrupt--\allowbreak resume
stock-identical, still repairs the \#6663 cell at the parent gate with
each branch firing exactly once, and keeps a stray ordinary-address
resume inert, on both \texttt{InMemorySaver} and \texttt{SqliteSaver},
both environments. The
matched pairs are the ablation a reviewer would ask for: the validity
gate alone flips CV (probe~123), the fork filter alone flips FD
(probe~134 against probe~125). Against the alternative repair families:
idempotency wrappers move the burden into user task code; replay
suppression breaks per-branch EO on legitimate forks; full
event-sourced history is the durable-execution adoption
Section~\ref{sec:related} discusses. \sys{} is the
minimal-interposition point.

\emph{Interposition overhead (probe~139).} $N{=}200$ fresh-database
iterations per configuration of the full interrupt protocol, on two
hosts with a $3$--$5{\times}$ gap in absolute storage latency
(Supplement Table~S3(a)): within $5\%$ of stock in the container,
within $\pm 1.7\%$ on the developer host across two 200-iteration runs
(committed per-cell deltas $-0.34\%$ to $+0.38\%$). The verdict holds at both latency regimes: on the single-threaded
protocol the enforcement point adds no measurable latency, the
microbenchmark being the adversarial denominator with no model latency
to hide behind. Concurrency is probe~157 (Section~\ref{sec:package});
Section~\ref{sec:engine} measures the engine comparison's semantic
cells and a first single-workload latency comparison.

\subsection{Repairing the cross-process cell, and where the repair binds}
\label{sec:crossproc}
Probe~159 measures a cell the \emph{default} shipped shim does not
repair. The design implication it states --- put the consumption record
in the shared store under a compare-and-swap, not in the interposition
layer --- is realized as a saver-level gate in probe~165, since
promoted into the shipped shim as its opt-in cross-process gate
(v0.1.2), whose development reproduced, on a second property, the
structural finding the 125/134 pair established for FD.

\emph{Write path (falsified).} The first design claimed
$\langle\mathit{thread}, \mathit{checkpoint}\rangle$ in a shared durable
claims table before any \texttt{\_\_resume\_\_} pending write was
accepted. Measured at the pins, the race still duplicated in every
repetition with the gate active, and the loser was rejected loudly ---
after its effect had already fired. An instrumented trace gives the
reason: the null-task \texttt{\_\_resume\_\_} journal write is submitted
to a background executor and is \emph{concurrent with} gated execution
rather than ordered before it, so a veto raised there surfaces only at
superstep join.

\emph{Read path (repairs the cell).} Rebound at the durable-state read
the loop performs before gated execution --- inside \texttt{get\_tuple},
when the returned checkpoint carries a pending interrupt --- the gate
takes the claim with one \texttt{INSERT} under a uniqueness constraint
(SQLite primary key; Postgres \texttt{ON CONFLICT}). Both racers
demonstrably load the same checkpoint there, so exactly one can win. The
gated effect then fires \emph{once} in $10/10$ repetitions on
\texttt{SqliteSaver} and $10/10$ on live \texttt{PostgresSaver}, the
loser rejected before any node executes; the same-value and
different-value race arms both give the distribution $\{1{:}10\}$; the
sequential single-resume control passes untouched, so the gate does not
false-positive on the legitimate first consumption; the post-completion
stray stays inert; and the stock control reproduces the duplicate in the
same run ($5/5$, both backends) as the differential.

The finding this yields is not the gate but its \emph{location}. Twice
now, on two different properties, a repair placed at the persistence
write path was powerless while the same repair placed at the
durable-state read path succeeded: fork determinism (probe~125 fails,
probe~134 succeeds) and cross-process consume-once (the v1 gate fails,
the v2 gate succeeds). The common cause is visible in both traces --- the
loop's decision is taken from what \texttt{get\_tuple} returns, and the
saver is told about that decision only afterwards, so interposition below
a decision cannot veto it. That is a statement about where a conformance
repair can bind in a checkpointer-shaped architecture, conditioned on
the loop shape both traces exhibit: any plane whose execution loop reads
durable state, decides, then reports to the saver has its only
\emph{saver-level} enforcement seam at the read. A plane that consults
its saver before deciding is outside the claim, and the two matched
pairs establish the seam for this architecture, not for every possible
write-path design.

Two former limits are now interface rather than defect.
\texttt{get\_tuple} carries no read-intent discriminator, so a bare
state \emph{inspection} during a park takes the claim --- confirmed
live: \texttt{get\_state} on a parked thread consumed it in our own
regression harness --- which the shipped gate converts into an explicit
opt-out (\texttt{remit\_inspect} in the invocation config), the CO
analogue, at the read path, of the FI gap at the write path. That gap
is also why the gate ships opt-in rather than default-on:
\texttt{get\_tuple} serves state inspection and pre-execution loads
through one call with no bit distinguishing them, so a default-on gate
would convert every bare inspection of a parked thread into a
consumption unless every caller adopted \texttt{remit\_inspect} first.
The default is forced by the interface, not chosen by the repair; a
plane that adds a read-intent discriminator makes default-on safe. And the
gate is no longer a saver-level demonstrator: v0.1.2 promotes it into
the shipped shim as opt-in configuration
(\texttt{cross\_process\_gate=True}), taking the
\texttt{(thread,\,checkpoint)} claim in the saver's own database and
refusing the loser with a typed \texttt{RemitConsumeConflict} before
any node executes --- $\{1{:}10\}$ on both durable backends, the
ungated default retained and still measuring $10/10$, the permanent
differential. What remains bounded is scope, stated plainly: the gate
serializes ordinary-address deliveries for synchronous savers over one
shared store; racers distributed across two hosts against a networked
PostgreSQL server reproduce both sides (probe~174): the stock path
duplicates in $10/10$ repetitions inside a 5\,s gate, and the shipped
gate refuses the losing racer with \texttt{RemitConsumeConflict} in
$10/10$ before any node executes; partitions and wider topologies
remain unmeasured.

\subsection{From verified model to shipped package}\label{sec:package}
\sys{} ships as an installable package
(\texttt{remit-contract}; MIT) with a deliberate division of labor: \emph{every contract decision
is computed in Rust; Python translates types and applies verdicts.} Rust
is the substrate Verus verifies and one abi3 wheel ships the core with no
toolchain on the user's machine; the claimed novelty is never the
language, only the contract the core enforces.

Three layers. (i)~\texttt{remit-core}, a Rust crate
(\texttt{\#![forbid(unsafe\_code)]}, zero-dependency test suite)
re-implementing the verified abstract model item for item: the
append-only effect ledger with exactly-once admission, the
prefix-monotone commit gate with validity ordered before any append,
$\langle\mathit{checkpointId},\mathit{resumeIndex}\rangle$ branch keying,
the probe-134 view rule as a pure function, the per-plane
sequencer/journal, and recovery as a pure, order-independent function of
the durable log; \texttt{VERIFICATION.md} tabulates the
lemma-to-function correspondence. (ii)~\texttt{remit-py}, a PyO3 layer
exposing contract violations as typed exceptions
(\texttt{RemitDuplicateEffect}, \texttt{RemitPrefixViolation},
\texttt{RemitValidityError}, \texttt{RemitOrderViolation}, and, from
the v0.1.2 cross-process gate, \texttt{RemitConsumeConflict}).
(iii)~\texttt{remit.langgraph\_shim}, a \emph{decision-free} veneer over
any \texttt{BaseCheckpointSaver}: \texttt{get\_tuple} describes the
addressing to the core and applies its strip/keep verdict; \texttt{put}
reports the user validator's answer to the validity gate, which raises
before anything is delegated; \texttt{put\_writes} journals in the
sequencer. Every conditional in the veneer routes a core verdict or
extracts a config field --- an auditable claim.

Three bodies of executable evidence sit behind
Table~\ref{tab:ladder}'s rungs~4--6. \emph{Model conformance} (rung~5)
transliterates \texttt{ResumeContract.tla}'s action alphabet ---
enabling conditions included --- against the Rust core, shadowing the
TLA\textsuperscript{+} variables \emph{independently} so that agreement
is a claim about two implementations, and re-checking all six
invariants after every action; a seeded randomized arm runs
$2{\times}10^4$ sequences of up to 48 actions, and the exhaustive arm
removes the sampling caveat outright. Its state count exceeds TLC's 87/59 by design --- implementation-level
state is finer, merging fewer histories; the exhaustiveness claim is
unchanged. \emph{Concurrency} (rung~6) drives
admission, forking, and sequencing from up to 64 threads, checking
exactly-once admission, gap-free contiguous fork ordinals, a gap-free
per-plane journal, and inertness of racing stray resumes; probe~157 then
measures the end-to-end question over one shared saver,
$k \in \{1,4,16,64\}$, $N{=}200$ protocols per cell, four arms, both
environments. Two structural findings ride along: throughput is flat in
$k$ under both backends in both environments while median latency grows
roughly linearly --- the serialization ceiling is the process
(GIL-bound), not the backend, corroborated by a persistence-free
\texttt{InMemorySaver} control --- and the writer lock surfaces in the
tail ($k{=}64$ p95 of $2.7$\,s on SQLite against $1.3$\,s on Postgres
on the developer host, an ordering the container's overlay-filesystem
Postgres does not reproduce). That is why only correctness verdicts and
within-environment relative overheads travel (receipts:
\texttt{results/\allowbreak matrix/\allowbreak 157\_*.json}).
\emph{Integration at the paper's pins} replays the probe-134 protocol
against the Rust-core shim on \LG{}~1.2.9 --- eight cells, all passing:
bare and resume-map forks served their own values with the per-branch
ledger $[1,0]$ on both backends; same-value re-fork deterministic; stray
resume inert within one process; the validator raise surfacing as
\texttt{RemitValidityError} with nothing persisted; the core journal
strictly ordered; and the \emph{stock} saver reproducing \#6663 as the
differential control --- replicated container-and-host, re-executed by CI
on every push.

\subsection{Verification status and remaining obligations}\label{sec:verstatus}
\sys{}'s Verus surface is a \emph{set} of standalone targets, each
with a per-file tally on the pinned toolchain
(Verus~0.2026.05.03.8b81855), recorded in
\texttt{crates/remit/VERIFICATION.md} and collected as
Table~\ref{tab:verus}; targets overlap where a composed file restates
standalone ones, so no cross-file sum is claimed. Table~\ref{tab:verus}
gives the surface
\begin{table}[t]
\caption{\sys{}'s Verus surface, one row per file (tallies as recorded
in \texttt{VERIFICATION.md} on the pinned toolchain; the
\texttt{remit\_verus} filename prefix is elided). Composed targets
restate standalone content, so rows overlap and no cross-file total is
claimed; the two \texttt{negative/} certificates fail \emph{by
design} --- the mechanized evidence that the positive obligations have
content.}
\label{tab:verus}
\centering
\scriptsize
\begin{tabularx}{\columnwidth}{@{}ll>{\raggedright\arraybackslash}X@{}}
\toprule
File & Verified & Role \\
\midrule
\texttt{.rs} & 10, 0 err. & ledger/frontier core (EO/CO, PC, FD keying) \\
\texttt{\_cv.rs} & 2, 0 err. & CV gate lemmas \\
\texttt{\_all.rs} & 12, 0 err. & composed legacy target (overlaps rows above) \\
\texttt{\_fd\_machine.rs} & 5, 0 err. & FD behavioral half (inductive) \\
\texttt{\_rd\_interp.rs} & 6, 0 err. & RD order-independence, completed window \\
\texttt{\_recover\_exec.rs} & 7, 0 err. & executable recovery core (CI line-identical to shipped) \\
\texttt{\_ledger\_exec.rs} & 11, 0 err. & executable EO admission, PC/CV commit gate \\
\texttt{negative/} (2 files) & 1 err.\ each, by design & stock \#6663 serve rule and order-sensitive \#8039 rule falsified \\
\bottomrule
\end{tabularx}
\end{table}
row by row. Two rows carry more weight than their tallies suggest.
The \texttt{negative/} certificates are the mechanized evidence that the
positive obligations have content: they restate the same obligations under
the stock \#6663 serving rule and the order-sensitive \#8039 recovery
rule, and Verus rejects both. They are excluded from every verify-all path
and checked only through their documented commands, so a green build never
depends on an expected failure.

Table~\ref{tab:ladder} answers not \emph{what} is proved but
\emph{how far down} the proof reaches. Rung~5 carries most of the
weight: the composite state machine --- precisely the component this
paper declines to call verified --- has its transliterated transition
system \emph{exhaustively enumerated} against the six invariants at
both bound sets, a complete check of a finite structure. Rung~7 is the
honest weak one, and rung~8 is absent: no mechanized refinement
connects the Verus model to the compiled core. Anvil~\cite{anvil} is
the closest artifact of the same shape, including in what it declines
to claim end to end.

\begin{table}[!t]
\renewcommand{\arraystretch}{1.15}
\caption{How far the proof reaches. Each rung is a distinct kind of
mechanized evidence, not a restatement of the one above; tallies are
Table~\ref{tab:verus}'s. Rung~5 is exhaustive over a finite structure
rather than sampled, and it covers the composite state machine. Rung~7 is
audited by construction and reading, not mechanically. Rung~8 is absent
and is the paper's one unqualified verification gap.}
\label{tab:ladder}
\centering
\footnotesize
\setlength{\tabcolsep}{3pt}
\begin{tabularx}{\columnwidth}{@{}lXl@{}}
\toprule
& Mechanized evidence & Strength \\
\midrule
1 & Verus proofs over the abstract model: ledger/frontier, CV gate, FD and RD machines (10, 2, 5, 6 verified; 0 errors) & proved (model) \\
2 & Verus-verified \emph{executable} functions, not spec-level lemmas: recovery core (7), EO admission and PC/CV commit gate (11); 0 errors & proved (executable) \\
3 & \texttt{recover\_core} line-identical to the shipped function in \texttt{src/lib.rs}, CI-gated on every push & mechanically enforced \\
4 & Differential bridge from the rung-2 twins to the shipped \texttt{HashSet}/\texttt{HashMap} implementations ($\sim$70k sequences, in CI) & exhaustive at bound \\
5 & \textbf{Composite} core against the TLA\textsuperscript{+} transition relation: BFS over 6{,}110 distinct states / 13{,}035 transitions at R0, and 414{,}675 states / $1.07{\times}10^{6}$ transitions at scaled bounds, six invariants at every state & exhaustive at bound \\
6 & 64-thread admission/fork/sequencing stress; probe~157's 6{,}400 end-to-end protocol executions, per-protocol exactly-once in all & tested under concurrency \\
7 & Decision-freeness of the Python veneer (every conditional routes a core verdict or extracts a config field) & audited, not checked \\
8 & Refinement relation, Verus model $\to$ compiled core & \emph{absent} \\
\bottomrule
\end{tabularx}
\end{table} An earlier claim of a single fifteen-item composed discharge is
retired with the deleted definitional lemmas, and the full discharge
log --- including a withdrawn CV/RD file that carried placeholder
\texttt{assume(false)} bodies --- is recorded with dates in
\texttt{VERIFICATION.md}. CV is additionally
demonstrated live (probe~123) and RD carries the executor-layer and
adversarial-order evidence (probes~124, 136). The production sequencer
behind \texttt{BaseCheckpointSaver} ships (Section~\ref{sec:package}),
with the contract decisions in Rust and the explicit fork flag the FI
obligation calls for available as the deployment discriminator
(\texttt{fork\_\allowbreak on\_\allowbreak explicit\_\allowbreak checkpoint=False} plus a configurable flag
key). The unverified surface is narrowed and named: no mechanized
refinement connects the Verus model to the compiled \texttt{remit-core}
(the conformance harness and concurrency suite are bridge evidence, not
proof); the PyO3 boundary and the decision-free veneer are unverified
Python-facing code, mitigated by construction and audit; and every
proof here remains a proof about \sys{}'s model, never validation of
any framework. The one item this list previously deferred ---
promotion of Section~\ref{sec:crossproc}'s read-path consumption claim
into the shipped shim --- is done (v0.1.2, opt-in), placing the gate
itself on the unverified-Python side of the boundary like the rest of
the veneer, with its accept/refuse decisions routed through the core's
pure verdict functions. The production-scale
engine head-to-head is scoped out rather than owed, for the reason
Section~\ref{sec:related} gives.

\section{Related Work}\label{sec:related}
\textbf{Recovery layers around the contract.} Crab~\cite{crab} provides
semantics-aware OS-level checkpoint/restore for agent sandboxes; its
recovery machinery synthesizes cached responses precisely so a restored
agent does not replay completed actions. DART~\cite{dart} certifies
rollback admissibility above persistence primitives. Both are
complementary by layer (Fig.~\ref{fig:layers}); neither specifies or
measures the primitive's own semantics.

\textbf{Semantic rollback attacks and approval integrity.} Closest to
the EO/CO axes, ACRFence~\cite{acrfence} identifies \emph{Action Replay}
and \emph{Authority Resurrection} as attack classes in agent
checkpoint--restore and mitigates them by recording irreversible tool
effects with replay-or-fork semantics on restoration --- the adversarial
face of EO and CO, and the mitigation family \sys{} instantiates.
ACRFence establishes the threat with a proof of concept and an issue
survey; this paper specifies the contract as a machine-checked model,
measures conformance deterministically across five frameworks and three
backends, and ships a repair at the checkpointer interface --- FI is the
protocol-level form of its replay-or-fork discrimination.
Consent-integrity mediation~\cite{cim} binds an approval to the true
content of the action; FD/CO address the orthogonal lifecycle axis ---
whether an answer is consumed once and whether a different answer yields
a different branch --- for which \#6663 and the \#2315
class~\cite{copilotkit2315} are the observed failures.

\textbf{Testing and verifying agent frameworks.} A 998-report empirical
study locates the dominant bug mass in execution-semantics
mechanisms~\cite{bugstudy}; LogicHunter~\cite{logichunter} generates
framework tests with an agentic oracle; robustness
benchmarks~\cite{agentbench} measure task success under perturbation.
These are horizontal; this paper is the vertical: a named contract, a
checked model, conformance verdicts, a reference implementation. The
control-plane half of the same lifecycle --- whether \emph{stop}
suppresses externally visible effects --- is measured and repaired by
SoundGate~\cite{soundgate}, which establishes one property of one
primitive from an enforcement point \emph{outside} the runtime, where
complete mediation follows from where the gate stands. \sys{} inherits
the harder position: checkpoint and resume \emph{are} the plane, so it
interposes inside the runtime and must measure its mediation rather
than assume it (Sec.~\ref{sec:crossproc}) --- the two papers are the
same question asked on either side of that line, and the lineage is
literal at one point: the ordering hazard formalized here as RD
(\#8039, Sec.~\ref{sec:plane}) surfaced as an unresolved residue of
that measurement and was filed upstream there.
AgentConform ---\allowbreak{} the
conformance checker of the AgentRFC framework~\cite{agentconform} ---
tests agent \emph{communication} protocols against
TLA\textsuperscript{+} models: the nearest methodological cousin, on a
different object. Runtime-enforcement
systems~\cite{pro2guard,veriguard} gate actions against policies; the
contract governs the substrate those gates rely on when they checkpoint
and resume. Failure-mode taxonomies~\cite{mast} motivate the deployment
stakes; SagaLLM~\cite{sagallm} adds compensation semantics above the
plane this paper specifies.

\textbf{The persistence traditions the plane did not inherit.} Akka
Persistence~\cite{akka} and Orleans~\cite{orleans} offer event-sourced
state with stated persistence semantics for stateful entities;
Erlang/OTP supervision~\cite{armstrong} codifies restart-from-clean;
and the persistent-language line --- Argus's guardians and atomic
actions~\cite{argus}, orthogonal persistence~\cite{orthpersist} ---
made durability a language obligation with stated semantics decades
ago. Agent frameworks reimplemented the plane without inheriting the
discipline: event-sourced replay requires deterministic handlers, agent
loops embed nondeterministic model calls, so the frameworks reached for
snapshot-style checkpointing --- exactly the plane the contract
governs. FD is not linearizability~\cite{herlihywing}, which constrains
concurrent histories of one object, but a determinism obligation on
branch creation from a durable point. A direct port of Orleans-style
grain persistence or Akka's journal is a plausible alternative repair
to \sys{}.

\textbf{Why not just use Temporal or DBOS.} Durable-execution engines
enforce determinism and exactly-once, but impose their execution model:
Temporal (and its Cadence ancestry) quarantines nondeterminism behind
activities, side-effect APIs, and
versioning~\cite{temporal-determinism}; DBOS ties steps to database
transactions~\cite{dbos}; hosted orchestrators externalize the workflow
definition entirely~\cite{stepfunctions}. Agent frameworks chose
lighter, model-native persistence precisely to avoid these constraints;
the contract lets them keep that choice while stating the guarantees
they must still meet. Adopting the durable-execution model wholesale is a legitimate path
this paper does not oppose; the contract governs the frameworks that
opted out. Section~\ref{sec:engine} runs the paper's workload on an
embedded engine and measures every probed cell conformant, at
2.9--5.0$\times$ the checkpointer path's gate-answer latency: the
engine measurement prices the alternative, and \sys{}'s own baseline
is the stock checkpointer it interposes on (Supplement Table~S3).

\begin{table}[!t]
\renewcommand{\arraystretch}{1.15}
\caption{The contract against the obligations mature durable-execution
engines state for the analogous question. The column that matters is the
last: where an engine \emph{states} the obligation, the contract is a
restatement at another interface and no novelty is claimed; where it
\emph{delegates} to the application, or never exposes the write the
obligation constrains, it does not transfer to a framework that adopted
snapshots without the surrounding model.}
\label{tab:semmap}
\centering
\footnotesize
\setlength{\tabcolsep}{3.2pt}
\begin{tabular}{@{}lp{2.3cm}p{2.3cm}l@{}}
\toprule
 & Durable Funcs.~\cite{durablefn} / Temporal~\cite{temporal-determinism} & DBOS~\cite{dbos} (measured, probe~147) & Transfers? \\
\midrule
PC & stated: deterministic replay with memoized results & stated: recorded step results on re-execution & yes \\
EO & \emph{delegated}: activities at-least-once; exactly-once effects are the caller's idempotency key & steps transactional; external effects still caller-keyed & no \\
FD & branch identity engine-defined & \texttt{fork\_workflow} at a recorded step: an explicit branch address & partly \\
CV & engine owns serialization; no caller-facing validity obligation & same & no \\
CO & duplicate external signals are the caller's problem & \texttt{send} carries an idempotency key & partly \\
RD & stated: determinism plus versioning & stated: transactional steps & yes \\
\bottomrule
\end{tabular}
\end{table}

\textbf{Formal semantics for durable execution.} Realizing exactly-once
\emph{external} effects is the closest prior obligation to EO: Olive
achieves it for cloud storage by pairing an intent log with idempotent,
resumable operations~\cite{olive}, and the contract's EO is that
discipline stated as a caller-visible obligation on a plane that has
neither the log nor the idempotence. Closest in method is
Flux~\cite{flux}, which automatically verifies \emph{idempotence
consistency} of stateful serverless applications --- CO-e's shape one
layer up. The layer is the whole difference: Flux verifies that the
\emph{application's} functions tolerate the platform's retries, while
the \contract{} asks whether the \emph{plane's own} primitives keep
their promises --- no per-node idempotence proof could have caught
\#6663, where the second value is discarded during task preparation
before any node runs. The nearest prior act of writing
resume down is the published semantics for Durable
Functions~\cite{durablefn}; Beldi~\cite{beldi},
Netherite~\cite{netherite}, the shared-log successors
(Boki~\cite{boki}, Halfmoon~\cite{halfmoon}), and
AMBROSIA~\cite{ambrosia} build logged-effect or replay-based
exactly-once serverless execution, and Restate ships durable promises
aimed at agent loops~\cite{restate} --- each gives \emph{one engine} a
semantics enforced by construction, engine-internal for the reason
Table~\ref{tab:semmap} records, while this paper specifies a contract
for an \emph{ecosystem} of frameworks, none of which, to our knowledge,
states one; the novelty claim is scoped to agent frameworks, not to
formal resume semantics. Table~\ref{tab:semmap} makes the comparison
property by property, and it does not come out uniformly in this
paper's favor. PC and RD are restatements of obligations those engines
already state, and nothing is claimed for them beyond carrying them to
an interface where they were absent. The load-bearing rows are the
other four. EO is the sharpest: Temporal's activities are at-least-once
and its exactly-once story for external effects is a caller-supplied
idempotency key, so the contract's EO is \emph{stronger} than what the
canonical durable-execution engine promises --- ``just use Temporal''
does not dissolve the question, it relocates it into user code, which
is Remark~\ref{rem:fork}'s composition seen from the other side. CV has
no counterpart at all, because an engine that owns its own
serialization never exposes the write whose validity CV constrains ---
the agent frameworks do expose it, and one of them corrupts it
silently. FD and CO transfer only in part, and DBOS is the instructive
case: it satisfies both on the probed cells precisely because it ships
the two discriminators the FI obligation demands, a documented
branch-creating address and an idempotency key on the delivery API ---
corroboration of FI's necessity from a system built without reference
to this contract.

\textbf{Verified systems, crash recovery, and below-API checkpointing.}
IronFleet~\cite{ironfleet}, Verdi~\cite{verdi}, and
Perennial~\cite{perennial} define the assurance bar a mechanized
refinement from \sys{}'s Verus model to its compiled core would meet,
and FSCQ~\cite{fscq} and GoJournal~\cite{gojournal} prove crash safety
with refinement to running code --- the bar Section~\ref{sec:threats}
names as absent here. Beyond Verus itself~\cite{verus},
Anvil~\cite{anvil} verifies cluster-management controllers in Rust ---
the closest existing instance of \sys{}'s shape, and scoped in
Sec.~\ref{sec:verstatus}. RD's
identical-logs, identical-decisions discipline is foundational in
state-machine replication (Viewstamped Replication~\cite{vsr},
Raft~\cite{raft}); no novelty is claimed for the principle, only its
statement at an interface that lacks it. Below the API,
record-and-replay (rr~\cite{rr}), process-level checkpoint/restore
(CRIU, DMTCP~\cite{ansel2009}), and multi-level HPC checkpointing
(BLCR~\cite{blcr}, SCR~\cite{scr}, VeloC~\cite{veloc}) supply
determinism and state capture whose concern is I/O cost and fidelity,
not API-level effect semantics; RD and PC restate those obligations at
the API level for a plane that adopted snapshots without the
discipline. Stating obligations over an observable interface has its
own lineage --- design by contract~\cite{meyer}, interface
automata~\cite{interfaceautomata}, session types~\cite{sessiontypes},
lightweight modeling in Alloy~\cite{alloy-jackson} --- of which the
\contract{} is the resume-plane instance.

\textbf{Workflow recovery, effect discipline, and language-level
suspension.} What may re-execute on resume has a two-decade literature
agent frameworks did not inherit: the ConTract
model~\cite{contractmodel} (whose name this paper's title unknowingly
echoes), workflow recovery~\cite{ederliebhart}, the workflow-pattern
catalogs~\cite{wfpatterns}, exception patterns~\cite{russellpatterns},
and compensation from Sagas~\cite{sagas} onward; Petri-net workflow
soundness~\cite{aalst1998} and BPMN semantics~\cite{dijkman2008}
established machine-checkable process models two decades earlier, and
TCC-style atomic interactions~\cite{pardonpautasso} are the REST-era
compensation discipline. Helland's idempotence
line~\cite{helland-cidr,helland-idem} is the canonical argument that
exactly-once effects require application-level idempotency keys ---
Remark~\ref{rem:fork}'s FD-plus-key composition is that argument at the
fork --- and Ramalingam and Vaswani derive idempotence \emph{by
construction} for interrupted workflow
programs~\cite{ramalingamvaswani}, the language-level ancestor of the
discipline EO/CO place on the framework. At the RPC layer the lineage
runs from at-most-once call semantics~\cite{birrellnelson} to RIFL's
durable completion records consulted before
re-execution~\cite{rifl} --- the exactly-once discipline whose
checkpointer-plane analogue is \sys{}'s ledger. In programming-language terms the plane is a durable delimited
continuation~\cite{felleisen1988,danvyfilinski}, operationally a
persisted effect handler~\cite{plotkinpretnar}; the web-continuation
line met the same back-button fork and resolved it the same way ---
explicit branch identity~\cite{queinnec}. Ray~\cite{moritz2018}
rebuilds actor state from lineage inside the substrate several agent
frameworks run on; Airflow~\cite{airflow} and Prefect~\cite{prefect}
push into user task code the idempotency discipline EO/CO here place on
the framework; Kafka's transactional dual-write~\cite{wangkafka} solved
commit-and-fire one layer down; crash-only design~\cite{crashonly}
anticipates the stance that recovery paths are primary.

\textbf{Classical foundations and methodology.} The properties
instantiate classical notions --- rollback-recovery
determinism~\cite{elnozahy}, distributed snapshots~\cite{chandy},
exactly-once state management in stream processing (Flink~\cite{flink},
MillWheel~\cite{millwheel}), ARIES' repeating history~\cite{aries} ---
at an interface where they are absent. Methodologically the harness descends from crash-consistency
conformance testing (ALICE~\cite{alice},
CrashMonkey/B3~\cite{crashmonkey}), Jepsen's~\cite{jepsen} stance of
checking implementations against stated contracts, and ioco-style
model-based testing~\cite{tretmans}, over industrial
TLA\textsuperscript{+} practice~\cite{newcombe,lamportbook} ---
transplanted to a layer where the first task is to write the contract
down; the fault switches instantiate specification
mutation~\cite{ammannblack}, with lineage-driven fault
injection~\cite{ldfi} the systematic counterpoint. The \#8039 ordering defect motivating RD is
in \LG{}'s tracker~\cite{lg8039}.

\textbf{Isolation anomalies and black-box anomaly inference.} The
cross-process consume-once failure is a lost update on a
read--modify--write nobody made atomic, and its natural vocabulary is
the isolation literature: beyond the ANSI critique~\cite{berenson},
Adya et~al.'s implementation-independent definitions~\cite{adya} are
the portable form of the anomaly, and Elle~\cite{elle} is the state of
the art in \emph{inferring} such anomalies from client-observable
histories --- precisely the epistemic position probe~159's oracle
occupies, and the discipline a multi-racer characterization of that
cell should adopt. We name these rather than claim them: this paper
exhibits the anomaly at one concurrency shape and does not classify the
plane's isolation level.

\section{Threats to Validity}\label{sec:threats}
\textbf{Construct: is a documented weakness a violation?} The
classification rule (Table~\ref{tab:matrix}, note~b) is total and
symmetric, and it moves labels rather than measurements: under it
\LI{}'s prefix replay is \docd{}, \LG{}'s fork behavior and \CA{}'s
checkpointing behavior are \no{}, and \CA{}'s \texttt{@persist} restore
is \und{}. The strongest \CA{} claim (task skipping) is stated for crews,
whose probe requires live agents and is scheduled for the full matrix.

\textbf{Construct: mechanism comparability and model grounding.} Matrix
cells compare properties, not mechanisms: \LI{}'s fork \yes{} rests on
client-side snapshot copies where \LG{}'s \no{} concerns a server-side
thread fork, so cross-framework FD cells certify the property on each
framework's own idiom. The EO crash-path classification is resolved by the
documentation audit (Sec.~\ref{sec:pilot}); a maintainer clarification
would reclassify the cell as documented divergence without changing the
measurement. \texttt{LangGraphFork.tla} is an operational abstraction
grounded in the reproduced behavior and a source reading at the pinned
versions (Sec.~\ref{sec:derived}); it remains expert-established rather
than mechanically extracted, which is why model grounding and behavioral
verdicts are kept separately auditable.

\textbf{Construct: effect oracle.} The primary oracle is a
process-local counter, which captures non-idempotent re-execution
exactly but not production latencies or failure modes. The
durable-backend probes add an on-disk external ledger as a second
oracle, agreeing on every reported cell (probe~126); probe~142 moves
the state holder out of the tested process entirely --- an external
service in a separate OS process whose SQLite state survives the
workflow's \texttt{SIGKILL} and records the duplicated re-execution
($\langle$s1, s1, s2$\rangle$). A remote third party with production
failure modes remains out of scope; its interface to the contract is
Remark~\ref{rem:fork}'s idempotency-key composition. The live cells
(Sec.~\ref{sec:live}) audit counters at $N{=}20$ per cell (probe~148;
240 runs, zero harness errors), and probe~131 catches the live fork
violation via the external ledger in $80/80$ runs --- the oracle's live
validation is neither pass-only, single-model, nor single-host.
Probe~152 closes PC and CV live through lineage and schema observables;
live validation of RD would require semantics-stable model output and
is out of scope by design. Replication numbers are reported as observed
frequencies with Wilson intervals; the determinism argument is the
mechanism account, never the sample.

\textbf{Internal: probe fidelity.} Each \LG{} probe is built from the
minimal reproduction in the corresponding public issue, extended with
effect counters; the version sweep (Sec.~\ref{sec:sweep}) guards against
single-version artifacts; raw JSON outputs ship in the artifact. Every
pilot verdict replicated bit-identically on a second, separately
provisioned developer host --- a cross-host determinism check by the
same team, not an independent reproduction ---
(conventions follow our prior measurement work~\cite{tokenbudgets});
one live probe (131) had its model-construction import corrected
post-campaign to match the committed environment's pins --- the framework
path under test is untouched, and the re-run reproduces the committed
verdict fields ---
a determinism check that no hidden host dependence leaks into verdicts,
not independent methodological validation. A source-level mutation study
(probe~156) tests whether verdicts are causally coupled to the located
mechanisms: eight first-order mutants, hand-derived at the
resume-serving sites the mechanism accounts name and each anchored to a
unique line of the vendored source, were applied one at a time with the
probe suite byte-unchanged; all eight were killed on both hosts,
including the mutant at the exact precedence line the located \#6663
mechanism names (a causal-coupling check; no mutation-adequacy score is
claimed, and hand-placed mutants at watched sites cannot answer the
tuned-to-known-bugs charge). The complementary check mutates the \emph{harness} (probes~169, 170,
172; the full operator table and per-operator analysis are in the
supplementary material, Sec.~S1): probe~169 re-derives fifteen
load-bearing verdicts from the committed stable-view receipts, fifteen
of fifteen present and none in disagreement; four operators kill
exactly their predicted cells --- fork-value blindness kills \#6663 and
nothing else, a state-only oracle and an undercounting ledger each kill
the cross-process consume-once cell, and barrier removal kills nothing,
the timing-free claim discharged in evidence --- two operators required
documented corrections before they said anything, and crash no-op and
pin drift are out of scope by construction.
Label-rule inversion has \emph{no mechanical site at all}: the U/X rule
of Table~\ref{tab:matrix} note~b is applied when a reader assigns a
verdict, not stored by any probe. Note~b's sensitivity claim is
therefore discharged by relabeling rather than mutation; the
recomputation (Supplement~S4) shows the strictest alternative rule
touches exactly the two \und{} cells, all ten separations of
observation~(i) stand, and the rule choice moves labels, never the
fragmentation result.

Cross-environment claims carry their own mechanical guard, added after
a near miss: a container receipt was once produced by
copying its developer-host twin and rewriting the host field --- and
because every cell agreed, which is what the replication asserts, the
substitution was invisible to any check reading the cells. Cell
agreement is the claim, so it cannot also be the check; the audit
therefore asserts that no two receipts bearing different host
identifiers share a timestamp, so a copied receipt cannot pass as a
replication. The durable-backend probes (126--130) executed in a third, separately
provisioned environment (Sec.~\ref{sec:method}) and agree with every
InMemorySaver-path verdict; every remaining probe family, the scaled
TLC configuration, the 39-cell matrix, the packaged Rust/PyO3 suite,
and the engine-baseline cells replicate across container and host with
zero divergent stable fields. The harness contains no
timing, no randomness, and no model calls, so verdicts can change only
with package versions, which are pinned.

\textbf{External: breadth and drift.} The study covers five frameworks
--- three in depth, plus AutoGen AgentChat's restore plane and
pydantic-graph's persistence more narrowly --- on documented paths; AG2,
the OpenAI Agents SDK, the Claude Agent SDK, and Agno are in the full
matrix under construction. Individual violations may be patched upstream
--- \#7361 and \#6792 already were, after shipping as regressions ---
which is why the claims rest on the contract, the model, and
version-pinned receipts rather than on any single bug's longevity.

\textbf{Ecological: abstraction of the effect and of the crash.}
Neither oracle is a real payment-style API; the oracles audit the
framework's \emph{invocation} discipline, while end-to-end delivery
under partition or duplication belongs to the effect layer's
idempotency-key composition (Remark~\ref{rem:fork}). The cross-process
CO cell (probe~159) establishes that the sequential inertness guarantee
does not compose across processes, not a rate; probe~168 widens the
shape to $k{=}16$ and 25\,ms jitter over 40 cells on both backends and
measures the window by dose--response (Sec.~\ref{sec:pilot}). Three
residual limits are named: the dose--response arm ran on the developer
host alone; its backend columns share a jitter seed, so they are paired
and carry no weight for backend invariance; and $k{>}16$ and wider
jitter remain unmeasured, while cross-host distribution is measured at
one two-host topology (probe~174), partitions excluded --- what is
established is that the window admits every racer arriving within it,
with no ceiling observed to sixteen, not that no ceiling exists. A transport-level client retry reduces, at this API, to the
duplicate-delivery cases already probed (126 sequential-inert, 159
concurrent-duplicating). The crash model is
fail-stop at process granularity: exception-based in the matrix,
\texttt{SIGKILL}-based in probes~133/137/158/160, and in probe~164,
which extends real process death to the two remaining crash-bearing
cells --- \CA{}'s checkpoint-restore duplication and pydantic-graph's
mid-node unrecoverability reproduce identically (3/3 and 3/3, both
environments) --- so no matrix crash verdict rests on exception
semantics alone. Power loss, \texttt{fsync} reordering, and torn writes
belong to the crash-consistency literature~\cite{alice,crashmonkey},
below this plane. On the LLM-free objection: the properties are stated
over the persistence API and are model-independent by construction, so
a model in the loop can only add nondeterminism the properties already
place outside $f$, and the live cells check in both directions that
real model traffic neither masks nor manufactures the deterministic
verdicts (Sec.~\ref{sec:live}). What live traffic could still change
--- workload shapes reaching paths the probes do not --- is the
prevalence question this paper does not answer.
Finally, the formal results divide by quantifier shape as
Sec.~\ref{sec:independence} states: exhibited separations are settled
by fully enumerated finite witnesses and need no upgrade, and of the
two universal claims, one is now discharged. TLAPS proves the
reference conjunction unbounded:
$\textit{Reference} \Rightarrow (\textit{Spec} \Rightarrow
\Box\,\textit{ContractConjunction})$, by exactly the inductive
invariant exhibited in \texttt{IndCheck.tla} --- TLC-checked
\emph{inductive} at three constant sets ($8{,}610$ / $450{,}926$ /
$97{,}800$ invariant-states, two conjuncts found by TLC rejecting
weaker candidates --- with its typing generalized from the
model-checking bounds to unbounded sequences, so the theorem holds at
every constant assignment satisfying the module's assumptions
(\texttt{ResumeContractProofs.tla}: 196 obligations, zero omitted,
Zenon/SMT backends with Isabelle never invoked, replicated on two
hosts under two tlapm builds; runner and receipt in
\texttt{formal/tla/tlaps/}, the receipt carrying the spec's SHA-256
as proof the audited module is byte-unchanged). Frontier monotonicity
and the CO--EO containment are discharged as lemmas in the same file.
Two disclosures keep the claim honest. The proof burden sits in
consecution, not sufficiency: the invariant's load-bearing conjunct is
that the durable frontier trails the program counter by exactly one
--- \texttt{ExecTask} is the sole writer of the prefix-regression
flag, its write guarded by a condition that conjunct forbids --- so
for the properties the invariant carries structurally, the implication
step is near-definitional and the theorem's content is preservation
under every action. And the proofs module names the specification's
own unnamed \textsc{assume} verbatim (a tlapm citation limitation
across \textsc{extends}); the trust base is unchanged. Fault-footprint
completeness retains its bound-relative status --- exhaustive at
stated constants, re-derived an order of magnitude wider, silent
beyond --- and liveness, the parked companion module, and the
consumption-counting matrix remain TLC-checked results; TLAPS
obligations for a consistency-level lattice are discharged in adjacent
work~\cite{macconsistency}.

\section{Conclusion}\label{sec:conclusion}
The resume plane of LLM-agent frameworks carries human approvals and
non-idempotent effects across interrupts, crashes, and restores, and it
does so today without a stated semantics: three major frameworks
document three incompatible disciplines, two violate even the semantics
they state or imply, and the plane regresses across point releases with
user issues as its only specification. The \contract{} names the
properties that make ``resume'' mean something; a
TLA\textsuperscript{+} model checks a reference semantics, exhibits
each observed failure as a short counterexample, and maps every
injected fault's full violation footprint; and a deterministic,
LLM-free harness turns the properties into per-release conformance
verdicts. It delivers the contract with its fork-intent obligation; the
machine-checked model with its partial-independence result and its
reference conjunction TLAPS-proved unbounded; a
source-grounded, out-of-sample-tested mechanism account of the fork
violation; a backend-invariant, version-stable measurement; the
cross-process characterization --- every racer within the window
consumes, no ceiling observed to $k{=}16$, the window tracking the
gated node's own duration by dose--response; the discharged Verus
proofs with the recovery core verified as a shipped executable; and
\sys{} itself, repairing the fork and validity cells with exactly-once
intact across 6{,}400 concurrent protocols. The cross-process cell is
repaired at the read path and only there, in the shipped shim behind an
opt-in gate (Sec.~\ref{sec:crossproc}), holding with racers on two
hosts (probe~174: stock $\{2{:}10\}$, gate $\{1{:}10\}$) --- two
matched pairs establishing that interposition below a decision cannot
veto it. What it defers, and rests no claim upon:
mechanized refinement from the verified model to the compiled core,
deeper concurrent nesting and a corpus-scale prevalence study. The immediate
implication for practitioners is that ``the framework has
checkpointing'' licenses nothing about completed effects; the
implication for framework authors is that the contract is checkable,
the suite is a CI job, and resume can be made to mean resume.

\end{document}